\documentclass{article}

\PassOptionsToPackage{numbers, compress}{natbib}

\usepackage[preprint]{PhysGuard}

\usepackage[utf8]{inputenc}
\usepackage[T1]{fontenc}
\usepackage{hyperref}
\usepackage{url}
\usepackage{booktabs}
\usepackage{amsfonts}
\usepackage{amsmath}
\usepackage{amssymb}
\usepackage{amsthm}
\usepackage{nicefrac}
\usepackage{microtype}
\usepackage{xcolor}
\usepackage{graphicx}
\usepackage{algorithm}
\usepackage{algorithmic}
\usepackage{multirow}
\usepackage{subcaption}
\usepackage{wrapfig}
\usepackage{bm}
\usepackage{newtxmath}
\usepackage{colortbl}
\usepackage{tabularx} 
\usepackage{enumitem}

\newcommand{\T}{\top}
\newcommand{\mF}{\boldsymbol{F}}
\newcommand{\mG}{\boldsymbol{G}}
\newcommand{\mK}{\boldsymbol{K}}
\newcommand{\mV}{\boldsymbol{V}}
\newcommand{\mU}{\boldsymbol{U}}

\newtheorem{proposition}{Proposition}

\title{PhysGuard: Fisher-Guided Gradient Projection for Sim-to-Real Neural PDE Surrogates}

\author{
  Changjian Zhou$^{1}$ \quad
  Junfeng Fang$^{2}$ \quad
  Negin Yousefpour$^{1}$ \quad
  Peng Wu$^{3}$ \\ [1mm]
  \bfseries Bin Yan$^{1}$ \quad
  \bfseries Guillermo A Narsilio$^{1}$ \\[2mm]
  $^{1}$Faculty of Engineering and IT, University of Melbourne \\
  $^{2}$School of Computing, National University of Singapore \\
  $^{3}$Artificial Intelligence Research Institute, IFLYTEK Co., Ltd.%
}

\begin{document}

\maketitle

\begin{abstract}
Neural operator models trained on simulation data often lose accuracy when applied to experimental measurements due to the \textit{sim-to-real gap}. Standard fine-tuning with limited real data can reduce this gap, but it may also damage the core physics-relevant representations learned during pretraining. Although knowledge-preserving adaptation has been widely investigated in vision or language tasks, it remains unclear whether these methods are suitable for neural operators whose architectures and protected knowledge are fundamentally different. Neural operators need to preserve core-scale physical structures rather than semantic or visual features. We propose PhysGuard, a physics-preserving framework for accurate sim-to-real adaptation of neural operators. Specifically, PhysGuard uses the empirical Fisher Information Matrix computed on simulation data to identify physics-critical parameter directions, then restricts fine-tuning updates to directions that do not interfere with them. A layer-wise Gram-matrix formulation makes this efficient for models with millions of parameters, while an adaptive threshold automatically determines the protected subspace size. A spectral probe experiment shows that the dominant Fisher directions are strongly associated with low-frequency output structures. Experiments on benchmark across four neural operator architectures and different physical systems show that PhysGuard performs strongly on most evaluation metrics compared to baselines. The benefits are most evident under severe domain shift, where it reduces low-frequency error by up to 32\% compared to standard fine-tuning while maintaining adaptability. Our code is available at \url{https://github.com/ZhouChaunge/PhysGuard}.
\end{abstract}

\section{Introduction}
\label{sec:intro}

Training on simulations and deploying in the real world has become a standard workflow in scientific machine learning (SciML)~\citep{karniadakis2021physics, brunton2020machine}. Neural operator models have emerged as a particularly promising class of neural PDE surrogates within this paradigm~\citep{kovachki2023neural}, learning solution mappings from large-scale simulation data and enabling rapid inference for complex physical systems~\citep{mccabe2024multiple, herde2024poseidon}. Yet their performance often degrades substantially when applied to real experimental measurements---a phenomenon known as the sim-to-real gap~\citep{hu2026realpdebench, mohan2024what}, as shown in Figure~\ref{fig:001-intro}. This gap stems from the idealized assumptions that underlie the simulations, whereas the experimental data exhibit measurement noise, unmodeled effects, and other sources of variability. Consequently, neural operators trained exclusively on simulation data may fail to generalize to real-world scenarios, limiting their practical effectiveness in engineering applications.

Fine-tuning on limited real data is the most direct solution~\citep{wang2025luna}, but it comes with an important risk. Current studies have shown that unconstrained fine-tuning can damage the low-frequency physical structures learned during pretraining~\citep{rahaman2019spectral, xu2019frequency}. Specifically, during optimization, the model may focus on fitting high-frequency noise while neglecting large-scale coherent patterns such as vortex streets and mean flow profiles~\citep{qin2024toward}, which often capture the core physics of the system.

\begin{wrapfigure}{r}{0.5\linewidth}
  \centering
  \includegraphics[width=\linewidth]{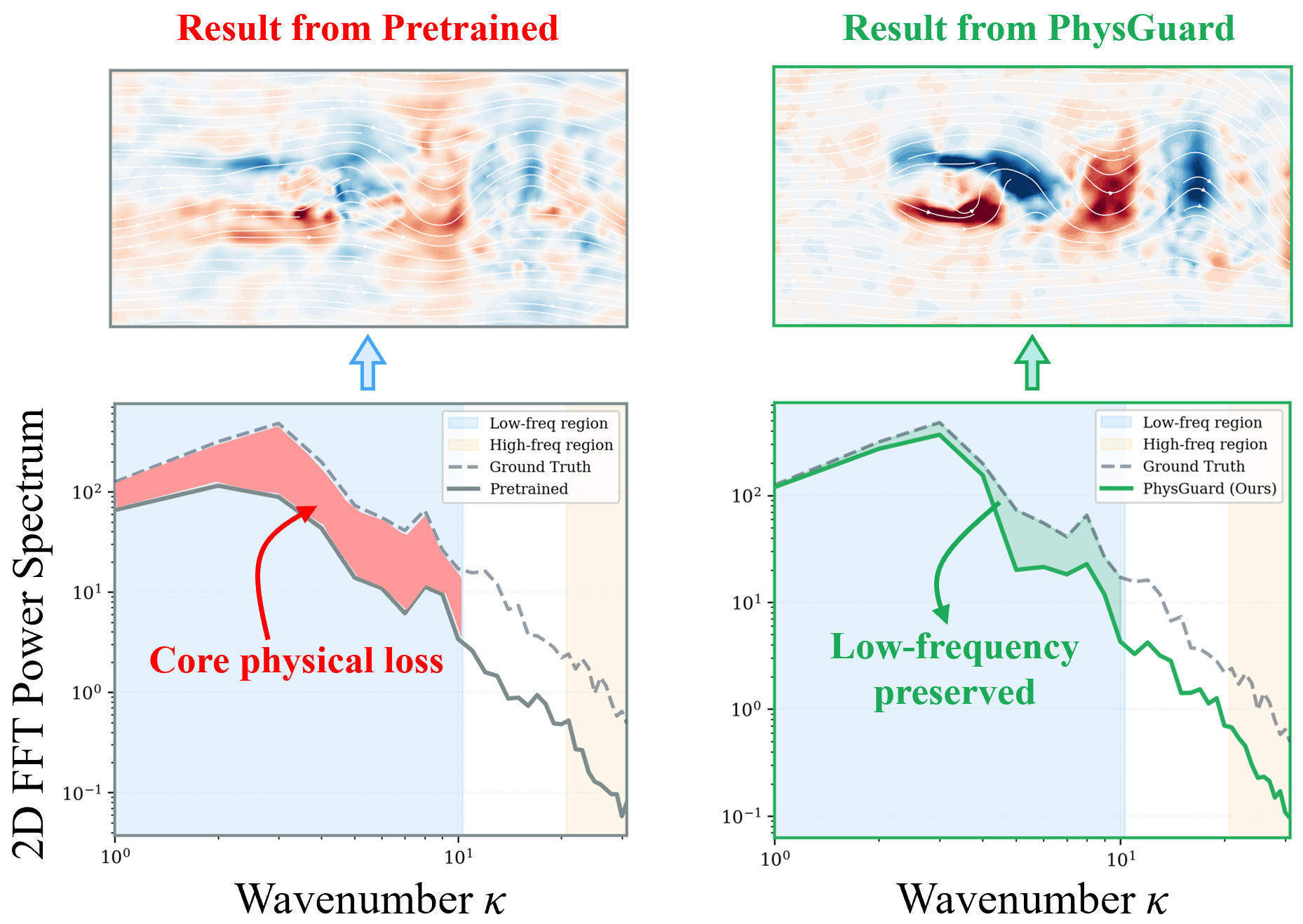}
  \caption{The left shows prediction results trained on simulated datasets, with frequency power spectra demonstrating that PhysGuard achieves superior consistency. Best viewed in color.}
  \label{fig:001-intro}
\end{wrapfigure}
A natural approach is to regularize the fine-tuning objective, for instance, by penalizing deviations from the pretrained weights~\citep{li2018explicit}. However, such penalties apply uniformly across all parameters and are sensitive to the choice of regularization strength. More structured alternatives have proven effective for vision and language tasks, including importance-weighted regularization (EWC~\citep{kirkpatrick2017overcoming}), gradient projection (GPM~\citep{saha2021gradient}), and parameter-efficient fine-tuning (LoRA~\citep{hu2022lora}). Yet, neural operators differ substantially from these models in both architecture and learning objective. Rather than encoding semantic or class-level features, they rely on mechanisms such as spectral convolutions~\citep{li2021fourier}, branch-trunk factorisations~\citep{lu2021learning}, and physics-aware attention~\citep{wu2024transolver} to preserve governing PDE physics. Whether these techniques transfer to the neural operator setting has not been systematically investigated.

We propose \textbf{PhysGuard}, a physics-preserving framework for sim-to-real adaptation of neural operators. The key insight is that physical knowledge in a pretrained neural operator concentrates along a small number of principal directions in parameter space (empirically characterized in Section~\ref{sec:fisher_analysis}). Building on this, PhysGuard computes the empirical Fisher Information Matrix (FIM) on simulation data and projects fine-tuning gradients onto the null space of its top eigenvectors. This blocks updates that would overwrite physics-critical parameters while leaving all other directions free for adaptation. The method introduces no auxiliary loss terms or sensitive hyperparameters, and applies to any differentiable architecture. We validate it on the RealPDEBench~\citep{hu2026realpdebench} across four architectures and three physical scenarios. Our contributions are:
\begin{itemize}[leftmargin=*, itemsep=2pt, topsep=2pt]
\item We propose PhysGuard, a framework for sim-to-real adaptation of neural operators that preserves pretrained physics. A Gram-matrix kernel makes the subspace estimation tractable for million-parameter models.
\item Through a spectral probe, we reveal that the FIM spectrum of neural operators is low-rank and its top eigenvectors encode precisely the low-frequency PDE physics. This empirical finding has not been thoroughly reported.
\item PhysGuard ranks first on 38 of 48 metric--architecture--scenario combinations on RealPDEBench against other baselines. To our knowledge, this is the first systematic study of knowledge-preserving fine-tuning for neural PDE surrogates.
\end{itemize}

\section{Related Work}
\paragraph{Neural operators for PDE surrogate modelling.}
Conventional PDE solvers are often too costly for tasks that require repeated evaluation. Neural operators address this challenge by learning mappings between function spaces and serving as fast surrogates~\citep{kovachki2023neural}. Representative architectures include FNO~\citep{li2021fourier} and its factorised variant~\citep{tran2023factorized}, CNO~\citep{raonic2023convolutional}, DeepONet~\citep{lu2021learning}, Transolver~\citep{wu2024transolver}, and DPOT~\citep{hao2024dpot}. Recent efforts have also focused on multi-spatiotemporal-scale generalisation~\citep{gupta2022towards}. Detailed descriptions of the operators used in this work are given in Appendix~\ref{app:operators}. However, existing models are trained and benchmarked only on simulation data~\citep{li2021fourier, lu2021learning, kovachki2023neural}, and current benchmarks such as PDEBench~\citep{takamoto2022pdebench} and CFDBench~\citep{luo2024cfdbench} provide only simulated ground truth. As a result, deploying these models on real-world measurements affected by sensor noise and systematic domain shift remains largely unexplored. RealPDEBench~\citep{hu2026realpdebench} provides the first benchmark with paired numerical and experimental data for this problem.
\paragraph{Sim-to-real transfer and physics-informed learning.}
Bridging the simulation-to-reality gap has been widely studied in robotics and computer vision, where domain randomisation~\citep{tobin2017domain, peng2018sim} and feature alignment~\citep{ganin2016domain} are standard tools. A complementary line of research introduces physical priors directly into the learning process. Physics-informed neural networks~\citep{raissi2019physics} enforce PDE residual constraints, while physics-informed machine learning more broadly incorporates structural knowledge into model architectures and training objectives~\citep{karniadakis2021physics}. However, these methods primarily aim to improve generalisation during training or rely on physics supervision throughout optimisation. They do not directly address the challenge of adapting an already-trained surrogate to a new domain while preserving the physics it has already acquired.
\paragraph{Knowledge-preserving adaptation.}
Catastrophic forgetting~\citep{mccloskey1989catastrophic} is a central challenge when adapting pre-trained models to new domains. In the broader deep learning community, a rich set of strategies has been developed. Regularisation-based methods such as EWC~\citep{kirkpatrick2017overcoming}, Synaptic Intelligence~\citep{zenke2017continual}, and L$_2$-SP~\citep{li2018explicit} penalise deviations from pre-trained weights. Parameter-efficient methods such as LoRA~\citep{hu2022lora} and Adapters~\citep{houlsby2019parameter} freeze the backbone and insert small trainable modules. Subspace-based methods such as GPM~\citep{saha2021gradient} and its predecessor GEM~\citep{lopez2017gradient} constrain gradient updates to directions orthogonal to previously learned representations. Similar ideas have also been explored in the context of knowledge editing in language models, as exemplified by AlphaEdit~\citep{fang2025alphaedit} and CrispEdit~\citep{ikram2026crispedit}. PhysGuard shares the gradient-projection mechanism with GPM but differs in two respects: GPM builds its subspace from intermediate representations accumulated across a sequence of tasks, whereas PhysGuard constructs the subspace from loss-level Fisher information on a single simulation dataset, and targets the preservation of low-frequency physics rather than sequential task knowledge. These techniques have been extensively studied for language and vision models, yet their application to neural operator fine-tuning remains less explored.

\section{Method: Fisher-Guided Gradient Projection}
In this section, we present our approach to sim-to-real adaptation for neural operators. We first formulate the adaptation problem (Section~\ref{sec:formulation}), then identify physics-critical parameter directions via the empirical Fisher Information Matrix~\citep{amari1998natural, martens2020new} using a Gram-matrix formulation to address the associated computational challenges (Section~\ref{sec:fisher}), and finally constrain fine-tuning gradients to the complementary safe subspace (Section~\ref{sec:projection}).

\subsection{Problem Formulation}\label{sec:formulation}
Consider a neural operator model $f_\theta(\cdot)$ pretrained on a large simulation dataset $\mathcal{D}_{\text{sim}} = \{(x_i, y_i)\}_{i=1}^{M}$, yielding trainable parameters $\theta^* \in \mathbb{R}^d$ that capture the underlying physics of PDE. Our goal is to adapt this model to real experimental data $\mathcal{D}_{\text{real}} = \{(\tilde{x}_j, \tilde{y}_j)\}_{j=1}^{M'}$ during fine-tuning, while retaining the physical knowledge encoded in pretraining.

As shown in Figure~\ref{fig:001-intro}, the core physics learned from simulation reflected in the low-frequency components of the solution, which encode large-scale, globally coherent structures. We therefore use a low-frequency reconstruction loss $\mathcal{L}_{\text{low-f}}$ to quantify physics preservation, and formulate fine-tuning as the constrained optimization:
\begin{equation}
\min_\theta \;\; \underbrace{\mathcal{L}_{\mathrm{real}}(\theta) = \frac{1}{M'}\sum_{j=1}^{M'}\ell\!\left(f_\theta(\tilde{x}_j),\, \tilde{y}_j\right)}_{\;\text{adapt to real data}} \quad \text{s.t.} \quad \underbrace{\mathcal{L}_{\text{low-f}}(\theta) \leq \mathcal{L}_{\text{low-f}}(\theta^*)}_{\text{preserve pre-trained knowledge}},
\label{eq:objective}
\end{equation}
where $\mathcal{L}_{\text{real}}$ is the loss in real data, and $\mathcal{L}_{\text{low-f}}$ measures the prediction error on the low-frequency components of the solution.

Directly enforcing such constraint with a penalty term may introduces extra hyperparameters~\citep{kirkpatrick2017overcoming, li2018explicit, zenke2017continual}. Instead, we adopt a geometric perspective whereby fine-tuning updates are restricted to parameter directions along which the pretrained loss is insensitive, approximately satisfying the constraint by construction. The key question then becomes how to identify these directions, which we will address next.

\subsection{Physics Subspace Identification}\label{sec:fisher}
In Section~\ref{sec:formulation}, we define the sim-to-real training objective for neural operators, where the key challenge focuses on identifying the physics-critical subspaces from the pretrained model. This section describes how we achieve this in three steps.
\paragraph{Fisher Information Matrix.}
Since neural operator architectures vary widely, we require an architecture-agnostic measure of prediction sensitivity with respect to each parameter direction. The empirical Fisher Information Matrix (FIM) serves this purpose. Given $N$ samples from the simulation dataset, we perform a forward and backward pass with $\theta^*$ held fixed to evaluate the per-sample gradient $\textbf{\textit{g}}_i = \nabla_\theta \ell(\theta^*;\, x_i, y_i) \in \mathbb{R}^d$, where $d$ is the dimension of parameters. Stacking these into a matrix $\mG = [\textbf{\textit{g}}_1, \ldots, \textbf{\textit{g}}_N]^{\T} \in \mathbb{R}^{N \times d}$, the empirical FIM is then:
\begin{equation}
\mF = \frac{1}{N}\, \mG^{\T} \mG \;\in\; \mathbb{R}^{d \times d}.
\end{equation}
Eigenvectors of $\mF$ with large eigenvalues identify parameter directions along which predictions are most sensitive, corresponding to the physics-critical directions we wish to protect. Conversely, eigenvectors with small or zero eigenvalues span directions along which predictions are largely insensitive, representing directions available for adaptation.

Intuitively, because the pretrained model is optimized on smooth simulation data that is dominated by large-scale physical patterns, the directions to which the loss is most sensitive correspond precisely to the output modes encoding these low-frequency structures. We empirically verify this connection in Section~\ref{sec:fisher_analysis}.

\begin{figure}
  \centering
  \includegraphics[width=\linewidth]{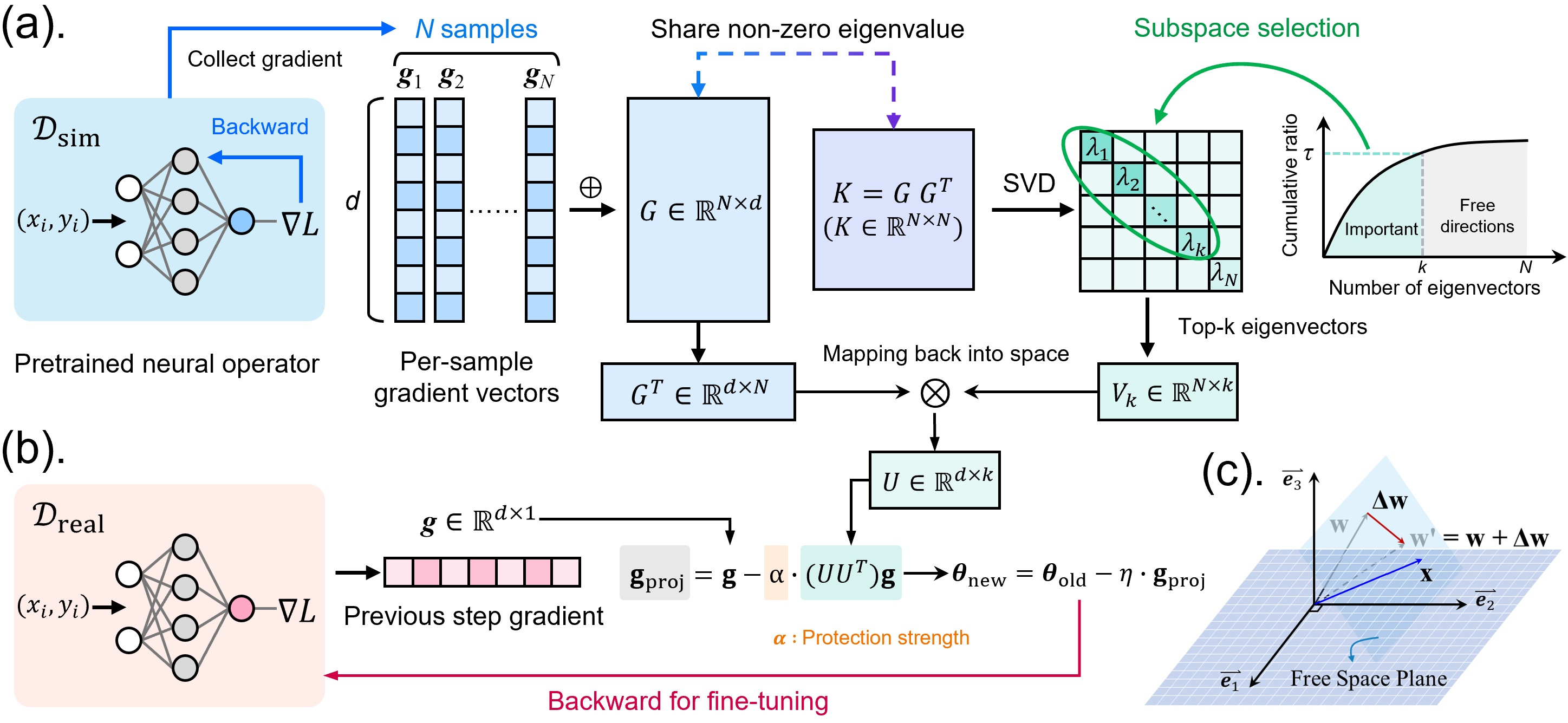}
  \caption{Overview of the PhysGuard framework.
  (a) Gradient vectors from a pre-trained model are stacked into $\mG$, and SVD of the Gram matrix $\mK$ is applied to identify the physics-critical subspace $\mU$;
  (b) During fine-tuning, each gradient $\textbf{\textit{g}}$ is projected onto the complement of $\mU$, ensuring parameter updates do not overwrite physical knowledge learned during pre-training;
  (c) Visualization of the null-space projection.
  Best viewed in color.}
  \label{fig:002-method}
\end{figure}

\paragraph{Efficient eigendecomposition based on Gram matrix.}
Since the parameter dimension $d$ can reach millions, directly eigendecomposing FIM poses a significant computational and memory challenge. To overcome this, we exploit the identity that $\mG^{\T}\mG$ and $\mG\mG^{\T}$ share the same non-zero eigenvalues. We therefore construct the much smaller Gram matrix\footnote{We use 10\% of simulation samples for FIM estimation, sufficient for subspace recovery in our experimental setting.} $\mK = \mG\mG^{\T} \in \mathbb{R}^{N \times N}$ and compute its decomposition as follows.
\begin{equation}
\left\{\boldsymbol{V},\; \boldsymbol{\Lambda},\; \boldsymbol{V}^{\T}\right\} = \text{SVD}\!\left(\mK\right)
\label{eq:gram_eigh}
\end{equation}
where $\boldsymbol{\Lambda} = \mathrm{diag}(\lambda_1, \ldots, \lambda_N)$ with $\lambda_1 \geq \cdots \geq \lambda_N \geq 0$, and the columns of $\boldsymbol{V} \in \mathbb{R}^{N \times N}$ are the corresponding eigenvectors. Because $\mK$ is symmetric positive semi-definite, this is an eigendecomposition (equivalently, SVD with $\boldsymbol{V} = \boldsymbol{V}^{\T}$). The corresponding FIM eigenvectors in parameter space are given by $u_j = \mG^{\T} v_j$ for all $\lambda_j \neq 0$, thereby avoiding the explicit construction of the $d \times d$ matrix. We provide a detailed proof of the eigenvalue correspondence and the recovery procedure in Appendix~\ref{app:gram}.

\paragraph{Adaptive subspace selection.}
The SVD provides a ranked list of eigenvectors, but the number of directions to protect is not fixed. To select the most relevant ones adaptively, we define the cumulative fraction of Fisher information captured by the top $k$ eigenvalues:
\begin{equation}
\rho(k) = \frac{\sum_{j=1}^{k}\lambda_j}{\sum_{j=1}^{N}\lambda_j}.
\label{eq:adaptive_k}
\end{equation}
We choose the smallest $k$ such that $\rho(k) \ge \tau$ \footnote{We set the threshold $\tau = 0.9$ in all experiments, retaining the eigenvectors that capture 90\% of total Fisher information.}, and then assemble the physics-subspace basis $\mV_k = [v_1, \ldots, v_k] \in \mathbb{R}^{N \times k}$ from the corresponding top-$k$ eigenvectors. 

Subsequently, these directions are mapped back to the original parameter space via $\mU = \mG^{\T}\mV_k \in \mathbb{R}^{d \times k}$. The columns of $\mU$ span the critical physics subspace, while its orthogonal complement defines the safe subspace for fine-tuning. A detailed illustration is provided in Figure~\ref{fig:002-method}(a).

\subsection{Constrained Fine-Tuning via Orthogonal Projection}\label{sec:projection}

Having identified the physics-critical subspace from Section~\ref{sec:fisher}, we now describe how to exploit it during fine-tuning. The key idea is to constrain each gradient update to lie in the orthogonal complement of this subspace, so that adaptation to real data cannot overwrite the physics structure encoded by pretraining. Concretely, given a gradient $\mathbf{g} \in \mathbb{R}^{d}$ and the subspace basis $\mU$ for a parameter group, we project out its physics-critical component:
\begin{equation}
\textbf{\textit{g}}_{\text{proj}} = \textbf{\textit{g}} - \alpha \cdot \mU\!{\mU}^{\T} \textbf{\textit{g}}
\label{eq:projection}
\end{equation}
where $\alpha \in [0, 1]$ controls the strength of protection. $\alpha = 0$ recovers standard fine-tuning, while $\alpha = 1$ enforces full projection onto the null space of $\mU^{\T}$, removing updates along physics-critical directions. Values in between provide a softer trade-off, partially suppressing these directions while still allowing some movement along them.

In practice, each layer $m$ maintains its own subspace basis $\mU^{(m)}$, estimated independently from simulation data as described in Section~\ref{sec:fisher}. The projection in Equation~\eqref{eq:projection} is then applied per-layer after every backward pass. Figure~\ref{fig:002-method}(b) illustrates this procedure. The complete two-phase workflow (offline subspace estimation followed by iterative constrained fine-tuning) is formalized in Algorithm~\ref{alg:physguard}. Crucially, the subspace is computed only once and can be reused across multiple downstream tasks without recomputation.

In our setting, spectral weights in architectures such as FNO are parameterized as complex numbers. To ensure compatibility with standard operations, we decompose each complex-valued gradient into its real and imaginary components. These are then used to construct the Fisher subspace and perform the projection. Further details are provided in Appendix~\ref{app:complex}.
\begin{algorithm}[t]
\caption{PhysGuard: Fisher-Guided Gradient Projection for Sim-to-Real Adaptation}
\label{alg:physguard}
\begin{algorithmic}[1]
\REQUIRE Pre-trained $\theta^*$, simulation data $\mathcal{D}_{\mathrm{sim}}$, real data $\mathcal{D}_{\mathrm{real}}$, samples $N$, threshold $\tau$, strength $\alpha$
\STATE \textbf{Phase 1: Subspace Estimation} (one-time)
\FOR{each layer $m = 1, \ldots, L$}
    \FOR{$i = 1, \ldots, N$}
        \STATE Sample $(x_i, y_i) \sim \mathcal{D}_{\mathrm{sim}}$;\; compute $\textbf{\textit{g}}_i^{(m)} = \nabla_{\theta^{(m)}} \ell\!\left(\theta^*;\, x_i, y_i\right)$
    \ENDFOR
    \STATE $\mG^{(m)} \leftarrow [\textbf{\textit{g}}_1^{(m)}, \ldots, \textbf{\textit{g}}_N^{(m)}]^{\T}$;\quad $\mK^{(m)} \leftarrow \mG^{(m)}{\mG^{(m)}}^{\T}$
    \STATE $\{\boldsymbol{V}^{(m)},\, \boldsymbol{\Lambda}^{(m)}\} \leftarrow \mathrm{SVD}\!\left(\mK^{(m)}\right)$;\; select $k_m$ via Equation~\eqref{eq:adaptive_k}
    \STATE $\mV_{k_m}^{(m)} \leftarrow [\boldsymbol{v}_1^{(m)}, \ldots, \boldsymbol{v}_{k_m}^{(m)}]$;\quad $\mU^{(m)} \leftarrow \mathrm{normalise}\!\left({\mG^{(m)}}^{\T}\, \mV_{k_m}^{(m)}\right)$
\ENDFOR
\STATE \textbf{Phase 2: Constrained Fine-Tuning} (iterative)
\FOR{each step $t = 1, \ldots, T$}
    \STATE Sample batch $\mathcal{B} \sim \mathcal{D}_{\mathrm{real}}$;\; forward \& backward to obtain $\textbf{\textit{g}}^{(m)}$ for all $m$
    \FOR{each layer $m$}
        \STATE $\textbf{\textit{g}}_{\mathrm{proj}}^{(m)} \leftarrow \textbf{\textit{g}}^{(m)} - \alpha\, \mU^{(m)}\!\left({\mU^{(m)}}^{\T} \textbf{\textit{g}}^{(m)}\right)$ \hfill $\triangleright$ \textit{Equation~\eqref{eq:projection}}
    \ENDFOR
    \STATE Update $\theta$ with optimiser using $\{\textbf{\textit{g}}_{\mathrm{proj}}^{(m)}\}$
\ENDFOR
\RETURN Adapted parameters $\theta_{\mathrm{adapted}}$
\end{algorithmic}
\end{algorithm}

\section{Experiments}
\label{sec:experiments}
In this section, we conduct experiments to address the following research questions:

\begin{itemize}[leftmargin=*, itemsep=0pt, topsep=0pt, parsep=0pt]
\item \textbf{RQ1}: Do the FIM eigenvectors align with low-frequency physics?
\item \textbf{RQ2}: Can PhysGuard improve sim-to-real performance across architectures and scenarios?
\item \textbf{RQ3}: Can PhysGuard preserve low-frequency physical structures during real-data adaptation?
\end{itemize}

\subsection{Experimental Setup}\label{sec:setup}

\textbf{Benchmark.}
We adopt RealPDEBench~\citep{hu2026realpdebench}, which pairs real experimental measurements with matched numerical simulations. We select three scenarios of increasing difficulty: \textbf{cylinder flow}, \textbf{controlled cylinder}, and \textbf{turbulent combustion}. Details are provided in Appendix~\ref{app:exp_details}.

\textbf{Architectures.}
We evaluate four neural operator architectures (3.5M to 50.4M parameters): \textbf{FNO}~\citep{li2021fourier} (Fourier-domain convolution), \textbf{CNO}~\citep{raonic2023convolutional} (multi-resolution CNN), \textbf{DeepONet}~\citep{lu2021learning} (branch-trunk factorization), and \textbf{Transolver}~\citep{wu2024transolver} (physics-aware attention). Configurations are in Appendix~\ref{app:exp_details}.

\textbf{Methods.}
We compare five approaches. \textbf{Pretrained} applies the simulation-trained model directly without adaptation. \textbf{DFT} (Direct Fine-Tuning) updates all parameters on real data without constraints. \textbf{L$_2$-SP}~\citep{li2018explicit} penalizes deviation from pretrained weights. \textbf{EWC}~\citep{kirkpatrick2017overcoming} applies a diagonal Fisher penalty to protect important parameters. \textbf{PhysGuard} projects gradients away from the physics-critical subspace ($\alpha = 1.0$).

\textbf{Metrics.}
We report both data-oriented metrics (\textbf{RMSE}, \textbf{$R^2$}) and physics-oriented metrics (\textbf{fRMSE} and its \textbf{Low-$f$} band). Full metric definitions and additional results are provided in Appendix~\ref{app:metrics}.

\subsection{FIM Eigenvectors Encode Low-Frequency Physics (RQ1)}\label{sec:fisher_analysis}

\begin{figure}[t]
  \centering
  \includegraphics[width=\linewidth]{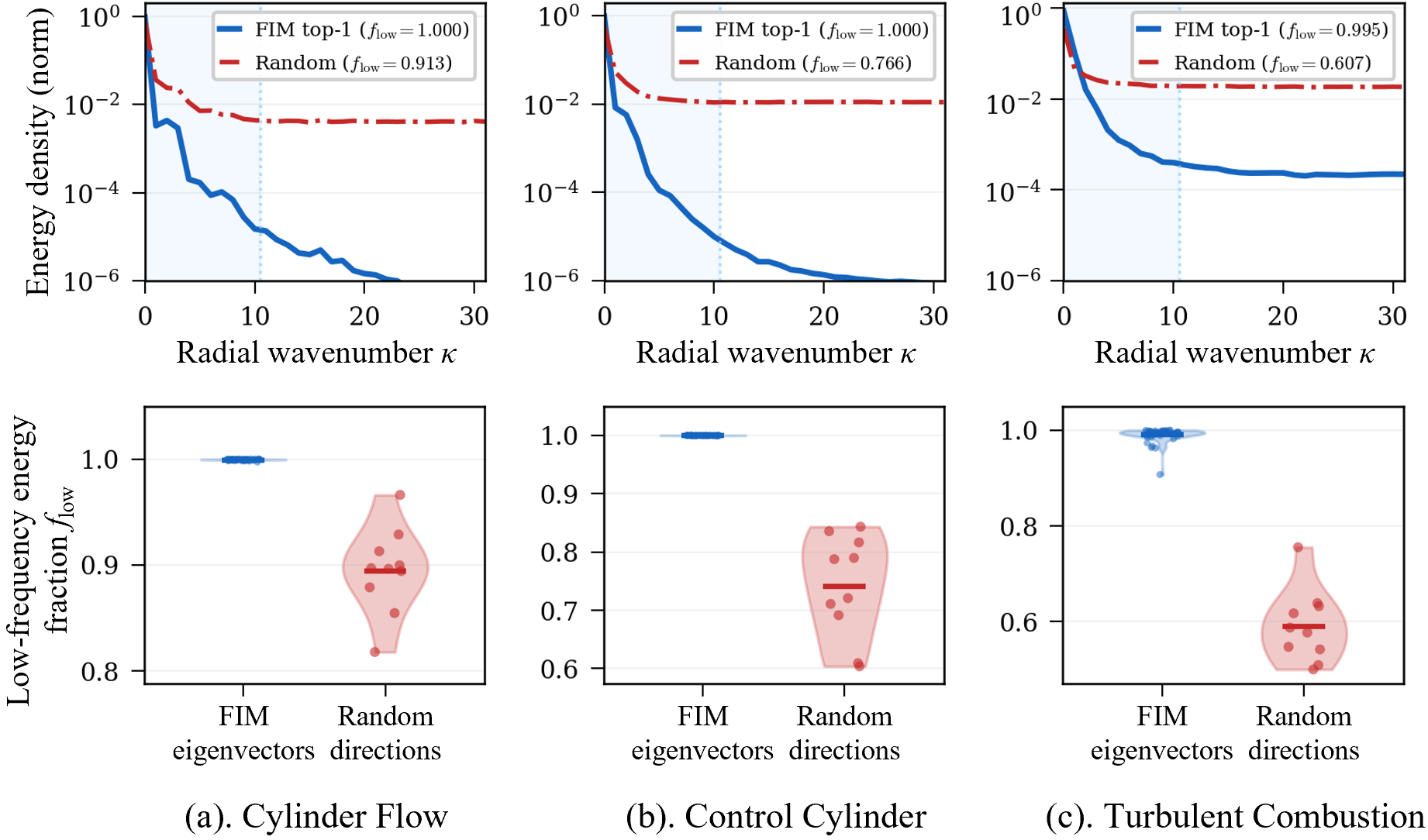}
  \caption{Spectral probe for FIM eigenvectors. \textit{Top}: Radial energy spectrum of the output perturbation induced by a \textcolor[RGB]{31,119,180}{FIM top-1 eigenvector} vs.\ a \textcolor[RGB]{190,60,50}{random direction}; shaded region = low-frequency band. \textit{Bottom}: Low-frequency energy fraction $f_{\mathrm{low}}$ for all \textcolor[RGB]{31,119,180}{top-$k$ FIM eigenvectors} vs.\ \textcolor[RGB]{190,60,50}{random directions}. Each point is one direction. See Appendix~\ref{app:probe} for detailed definitions. Best viewed in color.}
  \label{fig:eigenspectrum}
\end{figure}

While the low intrinsic dimensionality of neural network parameter spaces has been established for language models and image classifiers~\citep{li2018measuring, aghajanyan2021intrinsic}, it remains unexamined for neural operators trained on physical simulations. We therefore first verify this assumption before evaluating transfer performance, demonstrating that the FIM eigenvectors protected by PhysGuard indeed correspond to low-frequency physical structures.

We design a spectral probe to test this on a pretrained FNO (full protocol in Appendix~\ref{app:probe}). For a given parameter direction $v$, we slightly perturb the pretrained weights along $v$ and observe how the model output changes. We then apply a 2D Fourier transform to this output change and summarize it by the radial wavenumber $\kappa$, which characterizes spatial scale. Small $\kappa$ corresponds to large-scale patterns such as vortices, whereas large $\kappa$ captures fine-grained fluctuations. We define $f_{\mathrm{low}}$ as the fraction of total spectral energy in the low-frequency band, namely the lowest third of $\kappa$ modes. When $f_{\mathrm{low}} \approx 1.0$, the direction mainly affects large-scale structures. When it is small, the effect is distributed across higher frequencies.

We apply this probe to two types of directions: the top-$k$ FIM eigenvectors that PhysGuard protects, and random directions as a control. Figure~\ref{fig:eigenspectrum} (top) compares the FIM top-1 eigenvector with a random direction. Across all three scenarios, the FIM eigenvector produces perturbations concentrated almost entirely at low frequencies ($f_{\mathrm{low}} \geq 0.995$), while random directions spread energy more uniformly. Figure~\ref{fig:eigenspectrum} (bottom) extends this to all top-$k$ eigenvectors, confirming a consistent trend.

These results validate PhysGuard's design. The directions most sensitive to the pretrained loss are precisely those governing large-scale physics, blocking updates along them preserves low-frequency structures while leaving the remaining directions free for adaptation.

\subsection{Sim-to-Real Transfer Performance (RQ2)}\label{sec:main_results}

Table~\ref{tab:main} reports representative metrics across all architectures and scenarios. We summarize the main observations below.

\paragraph{Consistent improvement across architectures.}
PhysGuard ranks first in 38 out of 48 metric-architecture-scenario combinations. The remaining cases are concentrated in Controlled Cylinder, where the sim-to-real gap is small and all methods perform similarly.
This consistency spans architectures with fundamentally different inductive biases, ranging from FNO's spectral convolution to DeepONet's branch-trunk factorisation and Transolver's physics-attention, suggesting that the FIM-based subspace captures architecture-agnostic properties of the learned physics, rather than artefacts of any particular parameterisation.
Among the baselines, L$_2$-SP and EWC are less stable. L$_2$-SP underperforms DFT on several FNO metrics in the Controlled Cylinder scenario, indicating that a uniform $\ell_2$ penalty can over-restrict adaptation along directions that are not physics-critical. EWC performs close to DFT but rarely surpasses it by a large margin, because its diagonal Fisher approximation cannot capture the correlations among parameter directions that the full FIM eigenvectors encode. PhysGuard, by projecting gradients away from the physics-critical subspace rather than penalising individual weights, selectively blocks only the most sensitive directions while leaving the rest unconstrained.

\paragraph{DFT can degrade low-frequency structures.}
On Cylinder Flow, DFT raises the Low-$f$ error above the pretrained baseline for CNO. This means standard fine-tuning can overwrite large-scale physical patterns even when overall RMSE improves. PhysGuard avoids this regression and reduces Low-$f$ by 22\% to 32\% relative to DFT on FNO, CNO, and DeepONet.

\paragraph{Larger gains under larger domain shifts.}
The benefit of PhysGuard scales monotonically with the severity of the domain shift. This is most evident on Cylinder Flow, where pretrained $R^2$ values are lowest (0.34--0.72) and PhysGuard delivers the largest improvements. For DeepONet, PhysGuard achieves a relative $R^2$ improvement of over 50\%, compared to less than 20\% for DFT. On Transolver, all three baselines either fail to improve or slightly degrade Cylinder Flow performance, with DFT even increasing RMSE, while PhysGuard still yields a relative $R^2$ gain of approximately 16\%.

This pattern arises because under severe domain shift, the gradient signal from real data has a larger component along the physics-critical directions. Unconstrained methods risk overwriting these directions, whereas PhysGuard's projection precisely removes this harmful component. On Controlled Cylinder, where pretrained $R^2$ already exceeds 0.85, the gradient signal is predominantly orthogonal to the physics subspace, so projection removes very little and inter-method differences are small. On Turbulent Combustion, an intermediate-shift scenario, PhysGuard remains effective but with narrower margins.

\begin{table}[t]
\centering
\caption{Comprehensive results across all RealPDEBench scenarios. The best and second-best results are highlighted in \colorbox[rgb]{.741,.843,.933}{\textbf{Blue}} and \colorbox[rgb]{.886,.937,.855}{Green}. Best viewed in color.}
\label{tab:main}
\scriptsize
\setlength{\tabcolsep}{1.8pt}
\begin{tabular}{ll cccc cccc cccc}
\toprule
\multirow{2}{*}{Architecture} & \multirow{2}{*}{Method} & \multicolumn{4}{c}{Cylinder Flow} & \multicolumn{4}{c}{Controlled Cylinder} & \multicolumn{4}{c}{Turbulent Combustion} \\
\cmidrule(lr){3-6} \cmidrule(lr){7-10} \cmidrule(lr){11-14}
& & RMSE\,$\downarrow$ & $R^2$\,$\uparrow$ & fRMSE\,$\downarrow$ & Low-$f$\,$\downarrow$ & RMSE\,$\downarrow$ & $R^2$\,$\uparrow$ & fRMSE\,$\downarrow$ & Low-$f$\,$\downarrow$ & RMSE\,$\downarrow$ & $R^2$\,$\uparrow$ & fRMSE\,$\downarrow$ & Low-$f$\,$\downarrow$ \\
\midrule
 \multirow{5}{*}{\shortstack{\textbf{FNO}\\ (50.4\,M)}} & Pretrained & 0.08060 & 0.6530 & 0.01266 & 0.01829 & 0.02788 & 0.8455 & 0.00566 & 0.00646 & 0.03626 & 0.2735 & 0.00507 & 0.00811 \\
  & DFT & 0.06487 & 0.7753 & 0.01048 & 0.01651 & \cellcolor[rgb]{ .741,  .843,  .933}\textbf{0.00997} & \cellcolor[rgb]{ .886,  .937,  .855}0.9802 & \cellcolor[rgb]{ .741,  .843,  .933}\textbf{0.00119} & \cellcolor[rgb]{ .741,  .843,  .933}\textbf{0.00106} & 0.02548 & 0.6414 & 0.00334 & 0.00419 \\
  & L$_2$-SP & 0.06556 & 0.7704 & 0.01075 & 0.01739 & \cellcolor[rgb]{ .886,  .937,  .855}0.01191 & 0.9718 & 0.00139 & 0.00126 & 0.02779 & 0.5732 & 0.00374 & 0.00494 \\
  & EWC & \cellcolor[rgb]{ .886,  .937,  .855}0.06316 & \cellcolor[rgb]{ .886,  .937,  .855}0.7869 & \cellcolor[rgb]{ .886,  .937,  .855}0.01031 & \cellcolor[rgb]{ .886,  .937,  .855}0.01596 & \cellcolor[rgb]{ .741,  .843,  .933}\textbf{0.00997} & \cellcolor[rgb]{ .741,  .843,  .933}\textbf{0.9803} & \cellcolor[rgb]{ .886,  .937,  .855}0.00120 & \cellcolor[rgb]{ .886,  .937,  .855}0.00108 & \cellcolor[rgb]{ .886,  .937,  .855}0.02544 & \cellcolor[rgb]{ .886,  .937,  .855}0.6424 & \cellcolor[rgb]{ .886,  .937,  .855}0.00333 & \cellcolor[rgb]{ .886,  .937,  .855}0.00417 \\
  & PhysGuard & \cellcolor[rgb]{ .741,  .843,  .933}\textbf{0.06036} & \cellcolor[rgb]{ .741,  .843,  .933}\textbf{0.8054} & \cellcolor[rgb]{ .741,  .843,  .933}\textbf{0.00914} & \cellcolor[rgb]{ .741,  .843,  .933}\textbf{0.01131} & \cellcolor[rgb]{ .741,  .843,  .933}\textbf{0.00997} & \cellcolor[rgb]{ .886,  .937,  .855}0.9802 & 0.00121 & 0.00110 & \cellcolor[rgb]{ .741,  .843,  .933}\textbf{0.02532} & \cellcolor[rgb]{ .741,  .843,  .933}\textbf{0.6458} & \cellcolor[rgb]{ .741,  .843,  .933}\textbf{0.00331} & \cellcolor[rgb]{ .741,  .843,  .933}\textbf{0.00414} \\
\midrule
 \multirow{5}{*}{\shortstack{\textbf{CNO}\\ (8.0\,M)}} & Pretrained & 0.07239 & 0.7201 & 0.01108 & 0.01379 & 0.01652 & 0.9458 & 0.00226 & 0.00181 & 0.04265 & $-$0.0049 & 0.00603 & 0.01057 \\
  & DFT & 0.04988 & 0.8671 & 0.00865 & 0.01585 & \cellcolor[rgb]{ .886,  .937,  .855}0.00885 & \cellcolor[rgb]{ .886,  .937,  .855}0.9844 & \cellcolor[rgb]{ .886,  .937,  .855}0.00105 & \cellcolor[rgb]{ .886,  .937,  .855}0.00091 & 0.02688 & 0.6008 & 0.00361 & 0.00483 \\
  & L$_2$-SP & 0.04915 & 0.8710 & 0.00855 & 0.01506 & 0.00936 & 0.9826 & 0.00113 & 0.00100 & 0.02685 & 0.6018 & 0.00360 & 0.00481 \\
  & EWC & \cellcolor[rgb]{ .886,  .937,  .855}0.04855 & \cellcolor[rgb]{ .886,  .937,  .855}0.8741 & \cellcolor[rgb]{ .886,  .937,  .855}0.00773 & \cellcolor[rgb]{ .886,  .937,  .855}0.01355 & 0.00886 & \cellcolor[rgb]{ .886,  .937,  .855}0.9844 & \cellcolor[rgb]{ .886,  .937,  .855}0.00105 & \cellcolor[rgb]{ .886,  .937,  .855}0.00091 & \cellcolor[rgb]{ .886,  .937,  .855}0.02670 & \cellcolor[rgb]{ .886,  .937,  .855}0.6060 & \cellcolor[rgb]{ .886,  .937,  .855}0.00358 & \cellcolor[rgb]{ .886,  .937,  .855}0.00476 \\
  & PhysGuard & \cellcolor[rgb]{ .741,  .843,  .933}\textbf{0.04635} & \cellcolor[rgb]{ .741,  .843,  .933}\textbf{0.8853} & \cellcolor[rgb]{ .741,  .843,  .933}\textbf{0.00759} & \cellcolor[rgb]{ .741,  .843,  .933}\textbf{0.01148} & \cellcolor[rgb]{ .741,  .843,  .933}\textbf{0.00884} & \cellcolor[rgb]{ .741,  .843,  .933}\textbf{0.9845} & \cellcolor[rgb]{ .741,  .843,  .933}\textbf{0.00104} & \cellcolor[rgb]{ .741,  .843,  .933}\textbf{0.00088} & \cellcolor[rgb]{ .741,  .843,  .933}\textbf{0.02648} & \cellcolor[rgb]{ .741,  .843,  .933}\textbf{0.6125} & \cellcolor[rgb]{ .741,  .843,  .933}\textbf{0.00352} & \cellcolor[rgb]{ .741,  .843,  .933}\textbf{0.00462} \\
\midrule
 \multirow{5}{*}{\shortstack{\textbf{DeepONet}\\ (3.5\,M)}} & Pretrained & 0.10072 & 0.4581 & 0.01612 & 0.02793 & 0.05065 & 0.4903 & 0.01239 & 0.01284 & 0.04036 & 0.0999 & 0.00562 & 0.00947 \\
  & DFT & 0.09309 & 0.5371 & 0.01507 & \cellcolor[rgb]{ .886,  .937,  .855}0.02527 & \cellcolor[rgb]{ .741,  .843,  .933}\textbf{0.03004} & \cellcolor[rgb]{ .741,  .843,  .933}\textbf{0.8207} & \cellcolor[rgb]{ .741,  .843,  .933}\textbf{0.00570} & \cellcolor[rgb]{ .886,  .937,  .855}0.00446 & 0.02757 & 0.5799 & 0.00369 & 0.00448 \\
  & L$_2$-SP & 0.09121 & 0.5556 & 0.01428 & 0.02627 & 0.03223 & 0.7936 & 0.00680 & 0.00653 & 0.02805 & 0.5653 & 0.00377 & 0.00464 \\
  & EWC & \cellcolor[rgb]{ .886,  .937,  .855}0.09055 & \cellcolor[rgb]{ .886,  .937,  .855}0.5621 & \cellcolor[rgb]{ .886,  .937,  .855}0.01424 & 0.02599 & \cellcolor[rgb]{ .886,  .937,  .855}0.03006 & \cellcolor[rgb]{ .886,  .937,  .855}0.8205 & \cellcolor[rgb]{ .741,  .843,  .933}\textbf{0.00570} & \cellcolor[rgb]{ .741,  .843,  .933}\textbf{0.00439} & \cellcolor[rgb]{ .886,  .937,  .855}0.02745 & \cellcolor[rgb]{ .886,  .937,  .855}0.5837 & \cellcolor[rgb]{ .886,  .937,  .855}0.00366 & \cellcolor[rgb]{ .886,  .937,  .855}0.00444 \\
  & PhysGuard & \cellcolor[rgb]{ .741,  .843,  .933}\textbf{0.07435} & \cellcolor[rgb]{ .741,  .843,  .933}\textbf{0.7048} & \cellcolor[rgb]{ .741,  .843,  .933}\textbf{0.01246} & \cellcolor[rgb]{ .741,  .843,  .933}\textbf{0.01957} & 0.03023 & 0.8184 & \cellcolor[rgb]{ .886,  .937,  .855}0.00573 & \cellcolor[rgb]{ .741,  .843,  .933}\textbf{0.00439} & \cellcolor[rgb]{ .741,  .843,  .933}\textbf{0.02743} & \cellcolor[rgb]{ .741,  .843,  .933}\textbf{0.5844} & \cellcolor[rgb]{ .741,  .843,  .933}\textbf{0.00365} & \cellcolor[rgb]{ .741,  .843,  .933}\textbf{0.00442} \\
\midrule
 \multirow{5}{*}{\shortstack{\textbf{Transolver}\\ (4.3\,M)}} & Pretrained & 0.11147 & 0.3364 & 0.01692 & 0.03362 & 0.02642 & 0.8613 & 0.00570 & 0.00703 & 0.04441 & $-$0.0898 & 0.00623 & 0.01125 \\
  & DFT & \cellcolor[rgb]{ .886,  .937,  .855}0.11236 & \cellcolor[rgb]{ .886,  .937,  .855}0.3257 & \cellcolor[rgb]{ .886,  .937,  .855}0.01640 & \cellcolor[rgb]{ .886,  .937,  .855}0.03349 & \cellcolor[rgb]{ .886,  .937,  .855}0.01898 & \cellcolor[rgb]{ .886,  .937,  .855}0.9284 & 0.00268 & 0.00252 & \cellcolor[rgb]{ .886,  .937,  .855}0.03995 & \cellcolor[rgb]{ .886,  .937,  .855}0.1181 & \cellcolor[rgb]{ .886,  .937,  .855}0.00534 & \cellcolor[rgb]{ .886,  .937,  .855}0.00833 \\
  & L$_2$-SP & 0.11361 & 0.3107 & 0.01651 & 0.03370 & 0.01920 & 0.9268 & 0.00276 & 0.00265 & 0.04087 & 0.0773 & 0.00541 & 0.00864 \\
  & EWC & 0.11330 & 0.3144 & 0.01648 & 0.03361 & \cellcolor[rgb]{ .886,  .937,  .855}0.01898 & \cellcolor[rgb]{ .886,  .937,  .855}0.9284 & \cellcolor[rgb]{ .886,  .937,  .855}0.00266 & \cellcolor[rgb]{ .886,  .937,  .855}0.00246 & 0.04070 & 0.0845 & 0.00538 & 0.00854 \\
  & PhysGuard & \cellcolor[rgb]{ .741,  .843,  .933}\textbf{0.10695} & \cellcolor[rgb]{ .741,  .843,  .933}\textbf{0.3891} & \cellcolor[rgb]{ .741,  .843,  .933}\textbf{0.01567} & \cellcolor[rgb]{ .741,  .843,  .933}\textbf{0.03097} & \cellcolor[rgb]{ .741,  .843,  .933}\textbf{0.01852} & \cellcolor[rgb]{ .741,  .843,  .933}\textbf{0.9318} & \cellcolor[rgb]{ .741,  .843,  .933}\textbf{0.00254} & \cellcolor[rgb]{ .741,  .843,  .933}\textbf{0.00235} & \cellcolor[rgb]{ .741,  .843,  .933}\textbf{0.03841} & \cellcolor[rgb]{ .741,  .843,  .933}\textbf{0.1848} & \cellcolor[rgb]{ .741,  .843,  .933}\textbf{0.00512} & \cellcolor[rgb]{ .741,  .843,  .933}\textbf{0.00758} \\
\bottomrule
\end{tabular}
\end{table}

\subsection{Low-Frequency Preservation (RQ3)}\label{sec:qualitative}
\paragraph{Qualitative comparison.}
\begin{figure}[t]
  \centering
  \includegraphics[width=\linewidth]{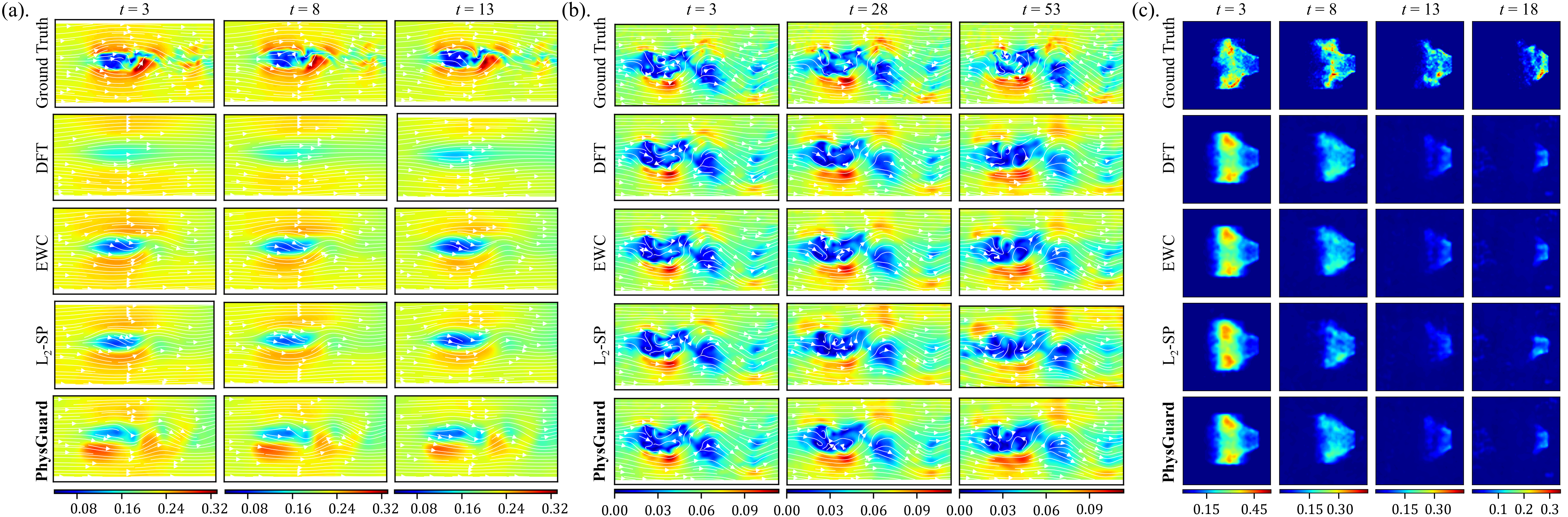}
  \caption{Predicted fields on three scenarios. (a)~Cylinder Flow (DeepONet), (b)~Controlled Cylinder (FNO), (c)~Turbulent Combustion (FNO). Each panel shows Ground Truth, Pretrained, DFT, EWC, L$_2$-SP, and PhysGuard at selected time steps. Full per-architecture results are in Appendix~\ref{app:visualization}. Best viewed in color.}
  \label{fig:qualitative}
\end{figure}

Figure~\ref{fig:qualitative} compares predicted flow fields.
In panel~(a), the ground truth shows alternating vortex cores (K\'{a}rm\'{a}n vortex street). The pretrained DeepONet produces a nearly uniform field. DFT and EWC recover some structure but miss the sharp vortex boundaries. PhysGuard produces the closest match to ground truth, consistent with its lower Low-$f$ error in Table~\ref{tab:main}.
In panel~(b), all methods track the ground truth well because the sim-to-real gap is small on this scenario.
In panel~(c), the combustion field has localized high-intensity zones. Pretrained predictions are blurred. All baselines improve substantially, with PhysGuard producing slightly sharper boundaries.

\paragraph{Low-frequency error across architectures.}\label{sec:spectral}

\begin{figure}[t]
  \centering
  \includegraphics[width=\linewidth]{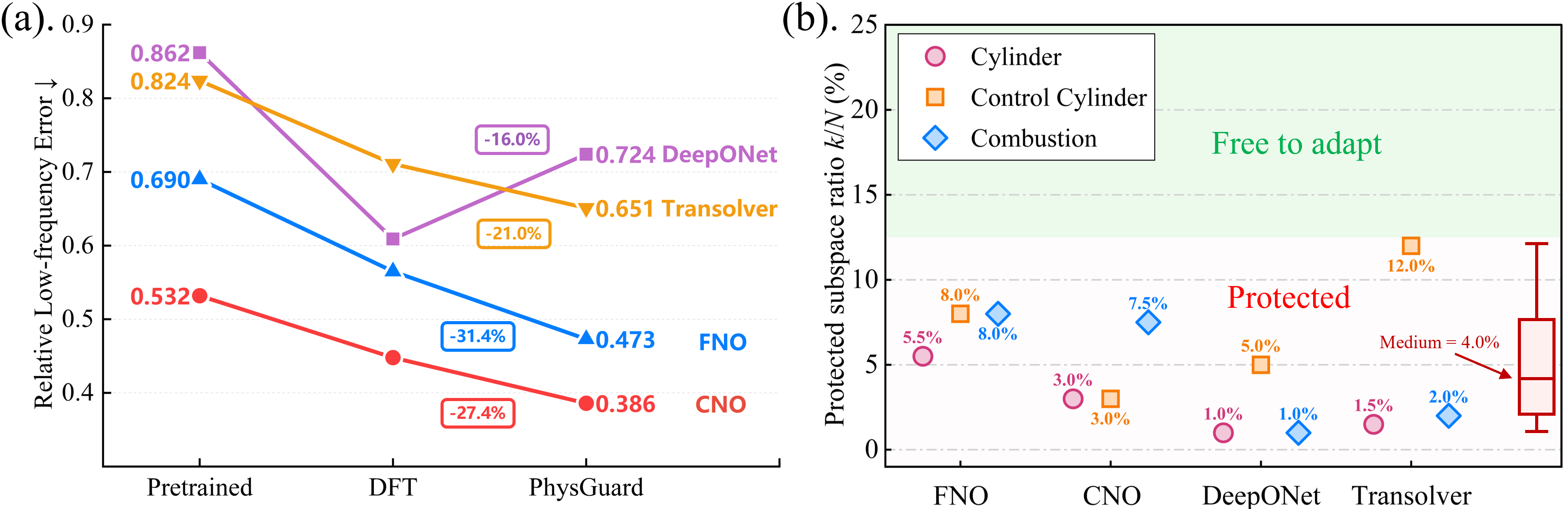}
  \caption{(a) Relative Low-frequency Error ($\downarrow$) from Pretrained to DFT to PhysGuard. Percentages indicate overall reduction. (b) Protected subspace ratio $k/N$ (\%) per architecture and scenario. The green area shows the part available for adaptation. Best viewed in color.}
  \label{fig:spectrum}
\end{figure}

Figure~\ref{fig:spectrum}(a) tracks how the low-frequency error changes from Pretrained to DFT to PhysGuard. PhysGuard reduces the low-frequency error by 31.4\% (FNO), 27.4\% (CNO), 21.0\% (Transolver), and 16.0\% (DeepONet) relative to the pretrained model.
Figure~\ref{fig:spectrum}(b) shows the protected subspace ratio $k/N$. Across all 12 architecture-scenario combinations, this ratio ranges from 1.0\% to 12.0\%, with a median of 4.0\%. In other words, PhysGuard constrains only a small fraction of parameter directions and leaves the rest free for adaptation.

\section{Limitations}
\label{sec:limitations}

PhysGuard is designed for settings where the FIM spectrum exhibits a low-rank structure, as a compact subspace can then efficiently capture the directions most critical to physical knowledge. This structure is consistently observed across all architectures we tested, including FNO, CNO, DeepONet, and Transolver. We note that PhysGuard may be less effective for certain pretrained operators (e.g., DPOT~\citep{hao2024dpot}) whose Fisher spectrum does not exhibit a clear low-rank pattern. We report this result in Appendix~\ref{app:dpot}.

Beyond this architectural dependence, our evaluation is limited to fluid-mechanics scenarios from RealPDEBench~\citep{hu2026realpdebench}. To our knowledge, this is currently the only benchmark in the community that provides results on real experimental data, which is why we focus our evaluation here. That said, whether PhysGuard generalises to other PDE families and sensing modalities remains an open question, and we hope this work encourages the SciML community to develop broader real-world benchmarks for evaluating neural PDE solvers.

\section{Conclusion}
\label{sec:conclusion}

We presented PhysGuard, a framework that preserves pretrained physical knowledge during sim-to-real adaptation of neural PDE surrogates through Fisher-guided gradient projection. The key idea is to constrain fine-tuning updates to directions orthogonal to the physics-critical subspace identified by the empirical Fisher Information Matrix, which prevents low-frequency degradation without requiring additional penalty terms or sensitive hyperparameters. Experiments on RealPDEBench across four architectures and three physical scenarios confirm consistent improvements, particularly in low-frequency fidelity and under large domain shifts. That said, challenges remain in scaling to foundation-model-scale operators and relaxing the fixed subspace rank assumption, both of which we see as promising directions for future work.

\bibliographystyle{unsrtnat}
\bibliography{references}

\newpage
\appendix
\renewcommand{\thefigure}{\Alph{section}.\arabic{figure}}
\counterwithin{figure}{section}
\renewcommand{\thetable}{\Alph{section}.\arabic{table}}
\counterwithin{table}{section}

\section{Broader Impacts}
\label{app:broader_impact}

PhysGuard explores a more reliable and principled way to adapt neural PDE surrogates from simulation to real-world measurements by restricting fine-tuning updates to directions that do not overwrite physics-critical parameters identified through the Fisher Information Matrix.

By explicitly preserving the low-frequency physical structures acquired during large-scale simulation pretraining, PhysGuard can substantially lower the data and compute cost of deploying neural operators in scientific and engineering applications such as fluid dynamics, weather and climate emulation, computational design, materials modeling, and digital-twin systems, where high-fidelity simulators are expensive and real measurements are scarce.

The framework is architecture-agnostic, hyperparameter-light, which makes it broadly applicable across the growing ecosystem of neural PDE surrogates and easy to integrate into existing scientific machine learning pipelines.

Beyond accuracy gains, PhysGuard offers a more transparent adaptation procedure: the protected subspace is derived directly from a well-defined information-geometric quantity computed on the source simulation, providing practitioners with an interpretable handle on \emph{which} aspects of the pretrained physics are being preserved during transfer.
We believe these properties can encourage wider, safer reuse of pretrained neural operators and reduce the common practice of training surrogates from scratch for every new experimental setup, which is both data-inefficient and energy-intensive.

At the same time, making it easier to fine-tune neural PDE surrogates on small real datasets may encourage their use as decision-support tools in safety-critical settings (e.g., aerodynamic certification, structural assessment, or environmental forecasting), where residual sim-to-real errors and out-of-distribution inputs should still be carefully accounted for.

We therefore recommend that deployments of sim-to-real neural operator systems be accompanied by complementary physics-based validation, uncertainty quantification, and domain-expert review whenever predictions feed into consequential decisions.

Overall, we view PhysGuard as a positive step toward trustworthy and resource-efficient sim-to-real transfer for neural PDE surrogates, advancing the practical promise of scientific machine learning while keeping accountability and safety at the center of deployment practice.

\newpage

\section{Implementation Details}
\label{app:implementation}

\subsection{Architectures}
\label{app:operators}

The four neural operator architectures used in the main experiments aim to learn a mapping from an input field to an output field (e.g., from initial conditions to future states). The key difference lies in how this mapping is represented and computed. A fifth architecture, the pretrained foundation-scale operator DPOT-S~\citep{hao2024dpot}, is discussed separately in Appendix~\ref{app:dpot}, where we analyse it as a negative case in which PhysGuard's low-rank Fisher assumption no longer holds.

\paragraph{Fourier Neural Operator (FNO)~\citep{li2021fourier}.}
The key idea behind FNO is to perform convolution in frequency space rather than physical space. Each layer first lifts the input to a multi-channel feature map, applies a Fast Fourier Transform (FFT), retains only the lowest-frequency modes (discarding fine-grained, high-frequency components), multiplies these modes by a set of learned complex weights, and then maps back to physical space via an inverse FFT. A pointwise linear bypass is added in parallel:

\begin{equation}\tag{B.1}
  v_{\ell+1}(\mathbf{x})
  = \sigma\!\Bigl(W_\ell\, v_\ell(\mathbf{x})
    + \mathcal{F}^{-1}\!\bigl[R_\ell \cdot \mathcal{F}[v_\ell]\bigr](\mathbf{x})\Bigr),
\end{equation}
where $R_\ell$ is the learned spectral filter and $W_\ell$ is the pointwise map. Because $R_\ell$ operates on Fourier coefficients (which are resolution-independent), an FNO model trained at one spatial resolution can be evaluated at a different resolution without retraining---a property known as discretisation invariance. The implicit low-frequency bias also makes FNO well suited to smooth, large-scale flow phenomena. In our experiments we use 4 layers, channel width 64, and mode truncations $(4, 12, 16)$ for the cylinder scenario and $(4, 16, 16)$ for combustion, totalling 50.4M parameters. Because FNO's spectral weights are complex-valued (\texttt{torch.complex64}), a real-embedding trick is needed during gradient projection; see Appendix~\ref{app:complex}.

\paragraph{Convolutional Neural Operator (CNO)~\citep{raonic2023convolutional}.}
CNO is motivated by a rigorous question: if we train a convolutional model on a coarse grid and then test it on a finer one, will the predictions converge? Standard convolutions do not guarantee this because downsampling introduces aliasing---spurious high-frequency artefacts that pollute the learned features. CNO fixes this by designing anti-aliased filters whose aliasing error provably vanishes as the grid is refined, guaranteeing convergence in the $L^2$ norm.

Architecturally, CNO follows a U-Net-style hierarchy: the input is progressively downsampled (by $2\times$ at each stage) through anti-aliased convolutions, then symmetrically upsampled, with skip connections linking matching resolution levels. Spectral normalisation and BatchNorm are applied at each resolution to promote stable Lipschitz behaviour across scales. We use 3 encoder/decoder levels, a channel multiplier of 32, and BatchNorm, giving 8.0M parameters---the smallest model in our suite. Despite its compact size, CNO is competitive on cylinder flow, suggesting its multi-scale inductive bias is a good match for spatially structured fluid fields.

\paragraph{Deep Operator Network (DeepONet)~\citep{lu2021learning}.}
DeepONet takes a different, more function-theoretic approach. The universal approximation theorem for operators states that any continuous operator can be written as a finite sum of products of two simpler functions. DeepONet operationalises this by training two separate networks in parallel: a branch net that encodes the input function sampled at fixed sensor locations, and a trunk net that encodes the query location at which the output is to be evaluated. Their outputs are combined via a dot product:
\begin{equation}\tag{B.2}
  \mathcal{G}^\dagger(u)(\mathbf{y})
  \approx \sum_{k=1}^{p} b_k(u)\, t_k(\mathbf{y}) + \text{bias},
\end{equation}
where $b_k$ and $t_k$ are the $k$-th outputs of the branch and trunk nets, respectively. This separation of ``\emph{what input was seen}'' from ``\emph{where to evaluate the output}'' means DeepONet naturally handles irregular meshes and arbitrary query locations without architectural changes. We use $p = 128$ output dimensions, 6-layer GELU MLPs for both networks, and dropout 0.1, yielding 3.5M parameters---the most compact model in our benchmark.

\paragraph{Transolver~\citep{wu2024transolver}.}
A Transformer applied naively to a fine spatial grid incurs $O(n^2)$ attention cost in the
number of grid points $n$, which quickly becomes prohibitive for 3D fields. Transolver addresses
this by introducing \emph{physics-attention}: instead of attending over every pair of grid points,
it first compresses the spatial domain into $S$ learned \emph{slice tokens}, each aggregating
information from a local neighbourhood of physical points via soft assignment weights. Standard
multi-head attention is then performed among these $S$ compact tokens rather than across $n$ raw
grid points, reducing the attention cost to $O(S^2)$ while still allowing long-range information
exchange. A final scatter step maps the slice-level representations back to the original grid.

In our 3D combustion setting we use $S = 16$ slices, 8 attention heads, 1 Transformer layer, and
hidden dimension 256, yielding 4.3M parameters. The shallow depth (single layer) is imposed by the
memory and compute constraints of the high-dimensional combustion grid
($64 \times 64 \times 20$); its effect on representational capacity is discussed in
Section~\ref{sec:main_results}.

\paragraph{Architectural comparison.}
Table~\ref{tab:arch_comparison} provides a side-by-side summary of the four architectures across
the dimensions most relevant to sim-to-real transfer: core computational mechanism, inductive
biases, per-forward-pass complexity, discretisation invariance, configuration used in this work,
and how Fisher information is distributed across parameters. DPOT-S, which serves as a negative
case for PhysGuard, is described separately in Appendix~\ref{app:dpot}.

\begin{table}[t]
\centering
\caption{Comparison of the four neural operator architectures used in the main experiments.}
\label{tab:arch_comparison}
\renewcommand{\arraystretch}{1.3}
\resizebox{\textwidth}{!}{%
\setlength{\tabcolsep}{6pt}
\begin{tabular}{l r p{1.8cm} p{2.8cm} c c p{2.2cm} c}
\toprule
\textbf{Architecture}
  & \textbf{Params}
  & \textbf{Core Operation}
  & \textbf{Inductive Bias}
  & \textbf{Complexity}\textsuperscript{\emph{a}}
  & \textbf{D.I.}\textsuperscript{\emph{b}}
  & \textbf{Config (this work)}
  & \textbf{Fisher}\textsuperscript{\emph{c}} \\
\midrule
\addlinespace[2pt]
FNO~\citep{li2021fourier}
  & 50.4\,M
  & Spectral (Fourier) convolution
  & Convolution in frequency domain; high-mode truncation enforces a low-frequency bias
  & $O(n\log n)$
  & \checkmark
  & 4 layers, width 64, modes $(4,12,16)$\textsuperscript{\emph{d}}
  & \textbf{Low-rank}\textsuperscript{1} \\
\addlinespace[2pt]
CNO~\citep{raonic2023convolutional}
  & 8.0\,M
  & Anti-aliased multi-scale convolution
  & Alias-free $L^2$-stable filters; U-Net hierarchy ensures convergence under resolution refinement
  & $O(n)$
  & \checkmark
  & 3 enc/dec levels, mult.\,32, BatchNorm
  & \textbf{Moderate}\textsuperscript{2} \\
\addlinespace[2pt]
DeepONet~\citep{lu2021learning}
  & 3.5\,M
  & Branch--trunk dot-product sum
  & Factored basis-function approximation; trunk net queries arbitrary output locations
  & $O(n_s {+} n_q)$
  & \checkmark
  & $p{=}128$, 6-layer MLP, dropout 0.1
  & \textbf{Broad}\textsuperscript{3} \\
\addlinespace[2pt]
Transolver~\citep{wu2024transolver}
  & 4.3\,M
  & Physics-attention over learned slice tokens
  & $S$ learned spatial slices enable global interaction at $O(S^2)$ cost, linear in DOF
  & $O(S^2)$
  & \checkmark
  & 1 layer, dim 256, 8 heads, $S{=}16$
  & \textbf{Dense}\textsuperscript{4} \\
\addlinespace[2pt]
\midrule
\multicolumn{8}{l}{%
\begin{minipage}{0.97\textwidth}
\footnotesize
\textsuperscript{\emph{a}}~Dominant cost per forward pass; $n$: spatial degrees of freedom (DOF), $S$: number of Transolver slices.\\
\textsuperscript{\emph{b}}~D.I. = Discretisation-invariant: the model can be evaluated at a different spatial resolution than it was trained on.\\
\textsuperscript{\emph{c}}~Fisher structure describes how simulation-critical Fisher information is concentrated across parameters:\\
\hspace*{1em}\textsuperscript{1}~\textbf{Low-rank}: dominated by a few spectral weight directions. \\
\hspace*{1em}\textsuperscript{2}~\textbf{Moderate}: spread across multiple resolution scales. \\
\hspace*{1em}\textsuperscript{3}~\textbf{Broad}: diffuse over all MLP weights; no dominant directions.\\
\hspace*{1em}\textsuperscript{4}~\textbf{Dense}: concentrated but shallow; few parameters per head due to single-layer design.\\
\textsuperscript{\emph{d}}~Modes $(4,16,16)$ for the combustion scenario.
\end{minipage}}\\
\bottomrule
\end{tabular}
}
\end{table}

\subsection{Evaluation Metrics}
\label{app:metrics}

We evaluate model performance from two perspectives: (1) \textbf{prediction accuracy}, and (2) \textbf{physical consistency}.

\subsubsection{Data-Oriented Metrics}

\paragraph{Root Mean Square Error (RMSE).}
RMSE measures the average squared difference between prediction and ground truth:

\begin{equation}
\mathrm{RMSE} =
\sqrt{
\frac{1}{n}
\sum (\hat{u} - u)^2
},
\tag{B.3}
\end{equation}
where $n$ is the total number of elements.

\paragraph{Relative $L_2$ Error.}
\begin{equation}
\mathrm{Rel}\,L_2 =
\frac{1}{B}
\sum_{i=1}^{B}
\frac{\|\hat{u}_i - u_i\|_2}{\|u_i\|_2},
\tag{B.4}
\end{equation}
where $B$ is the number of test samples. This normalises the error by the magnitude of the true field.

\paragraph{$R^2$ Score.}
\begin{equation}
R^2 =
1 - \frac{\sum (\hat{u} - u)^2}{\sum (u - \bar{u})^2},
\tag{B.5}
\end{equation}
where $\bar{u}$ is the mean value.
A higher $R^2$ indicates better predictions.

\subsubsection{Physics-Oriented Metrics}

\paragraph{Spectral Error (fRMSE).}
To evaluate whether the model captures structures at different scales, we transform the data into the frequency domain and compare errors there.

\begin{equation}
\mathrm{fRMSE} =
\sqrt{
\frac{1}{K}
\sum_{\kappa=0}^{K-1} E_\kappa
},
\tag{B.6}
\end{equation}

where $E_\kappa$ measures the error at frequency level $\kappa$.

To further diagnose which scales are most affected by fine-tuning, we partition the frequency spectrum into three bands: low-frequency components (capturing large-scale structures and dominant physical modes), mid-frequency components (capturing intermediate-scale features), and high-frequency components (capturing small-scale fluctuations and fine-grained details). Errors within each band are aggregated separately, allowing us to assess the extent to which fine-tuning preserves or disrupts physics at each scale.

\subsection{Experimental Setup}
\label{app:exp_details}

\paragraph{Datasets.}
We use three scenarios from RealPDEBench, covering fluid flow and combustion.
Each dataset is split into training, validation, and test sets using official splits to avoid data leakage.

\paragraph{Pretraining.}
Each model is pretrained from scratch on the numerical split using AdamW with a cosine annealing schedule. Configurations are scenario-specific to match each problem's resolution and complexity, the full set of hyperparameters is listed in Table~\ref{tab:pretrain_configs}.

\begin{table}[h]
\centering
\definecolor{PhHdr}{HTML}{1B3A6B}   
\definecolor{PhScn}{HTML}{2C6FAC}   
\definecolor{PhAlt}{HTML}{EBF2FA}   
\definecolor{PhRule}{HTML}{8BAED4}  
\caption{Complete pretraining configurations for the four main architectures across all three scenarios.}
\label{tab:pretrain_configs}
\renewcommand{\arraystretch}{1.35}
\resizebox{\textwidth}{!}{%
\setlength{\tabcolsep}{7pt}
\begin{tabular}{l r l r c c r >{\footnotesize}p{8.6cm}}
\specialrule{1.6pt}{0pt}{0pt}
\rowcolor{PhHdr}
\textbf{\color{white}Architecture} &
\textbf{\color{white}Params} &
\textbf{\color{white}LR} &
\textbf{\color{white}Iters} &
\textbf{\color{white}Batch/GPU} &
\textbf{\color{white}GPUs} &
\textbf{\color{white}Eff.\,batch} &
\textbf{\color{white}Architecture hyperparameters} \\[-1pt]
\arrayrulecolor{PhRule}\specialrule{0.45pt}{0pt}{0pt}
\multicolumn{8}{>{\columncolor{PhScn}}l}{%
  \rule{0pt}{10pt}%
  \textbf{\small\color{white}Scenario 1\enspace·\enspace Cylinder flow}%
  \quad{\color{white}\footnotesize grid $128\!\times\!64$,\enspace
    fields $u,v,p$,\enspace horizon $T\!=\!20$,\enspace
    ${\approx}5{,}000$ real train samples}%
  \rule[-3pt]{0pt}{3pt}} \\
 
FNO & 50.4\,M & $1\!\times\!10^{-4}$ &  4{,}000 & 16 & 2 & 32 & \texttt{n\_layers}=4, \texttt{width}=64, modes $(f_x,f_y,f_t)=(4,12,16)$ \\
 
\rowcolor{PhAlt}
CNO &  8.0\,M & $3\!\times\!10^{-4}$ &  5{,}000 &  4 & 4 & 16 & \texttt{N\_layers}=3, \texttt{channel\_multiplier}=32, \texttt{N\_res}=1, \texttt{N\_res\_neck}=8, BatchNorm, \texttt{latent\_lift\_proj\_dim}=64, LeakyReLU \\
 
DeepONet &  3.5\,M & $1\!\times\!10^{-4}$ &  5{,}000 &  4 & 4 & 16 & $p=128$, \texttt{dropout}=0.1 \\
 
\rowcolor{PhAlt}
Transolver &  4.3\,M & $1.80\!\times\!10^{-4}$ & 5{,}000 & 4 & 1 & 4 & \texttt{n\_layers}=1, \texttt{n\_hidden}=256, \texttt{n\_head}=8, \texttt{slice\_num}=16,\enspace grid $H\!\times\!W\!\times\!D=128\!\times\!64\!\times\!20$ \\[1pt]

\arrayrulecolor{PhRule}\specialrule{0.45pt}{0pt}{0pt}
 
\multicolumn{8}{>{\columncolor{PhScn}}l}{%
  \rule{0pt}{10pt}%
  \textbf{\small\color{white}Scenario 2\enspace·\enspace Controlled cylinder}%
  \quad{\color{white}\footnotesize grid $128\!\times\!64$,\enspace
    fields $u,v,p$,\enspace horizon $T\!=\!20$,\enspace
    ${\approx}9{,}500$ real train samples}%
  \rule[-3pt]{0pt}{3pt}} \\
 
FNO & 50.4\,M & $1\!\times\!10^{-4}$ &  4{,}000 & 16 & 4 & 64 & \texttt{n\_layers}=4, \texttt{width}=64, modes $(f_x,f_y,f_t)=(4,12,16)$ \\
 
\rowcolor{PhAlt}
CNO &  8.0\,M & $3\!\times\!10^{-4}$ &  5{,}000 &  8 & 4 & 32 & \textit{Same architecture as Scenario 1} \\
 
DeepONet &  3.5\,M & $5\!\times\!10^{-5}$ &  5{,}000 & 16 & 4 & 64 & $p=256$, \texttt{dropout}=0.1\enspace \textit{(wider trunk for richer actuation input)} \\
 
\rowcolor{PhAlt}
Transolver &  4.3\,M & $1.25\!\times\!10^{-4}$ & 5{,}000 & 4 & 1 & 4 & \texttt{n\_layers}=1, \texttt{n\_hidden}=256, \texttt{n\_head}=8, \texttt{slice\_num}=16,\enspace grid $H\!\times\!W\!\times\!D=64\!\times\!128\!\times\!10$ \\[1pt]
 
\arrayrulecolor{PhRule}\specialrule{0.45pt}{0pt}{0pt}
 
\multicolumn{8}{>{\columncolor{PhScn}}l}{%
  \rule{0pt}{10pt}%
  \textbf{\small\color{white}Scenario 3\enspace·\enspace Turbulent combustion}%
  \quad{\color{white}\footnotesize grid $128\!\times\!128$,\enspace
    fields $T,Y_{\mathrm{OH}},\ldots$,\enspace horizon $T\!=\!40$,\enspace
    ${\approx}59{,}000$ real train samples}%
  \rule[-3pt]{0pt}{3pt}} \\
 
FNO  & 50.4\,M & $1\!\times\!10^{-2}$ &  4{,}000 & 16 & 3 & 48 & \texttt{n\_layers}=4, \texttt{width}=64, modes $(f_x,f_y,f_t)=(4,16,16)$\enspace \textit{(higher $f_y$ for square grid)} \\
 
\rowcolor{PhAlt}
CNO  &  8.0\,M & $3\!\times\!10^{-4}$ &  5{,}000 &  4 & 3 & 12 & \textit{Same architecture as Scenario 1} \\
 
DeepONet  &  3.5\,M & $5\!\times\!10^{-4}$ &  3{,}000 &  4 & 3 & 12 & \textit{Same architecture as Scenario 1} \\
 
\rowcolor{PhAlt}
Transolver &  4.3\,M & $1.75\!\times\!10^{-4}$ & 5{,}000 & 4 & 1 & 4 & \texttt{n\_layers}=1, \texttt{n\_hidden}=256, \texttt{n\_head}=8, \texttt{slice\_num}=16,\enspace grid $H\!\times\!W\!\times\!D=64\!\times\!64\!\times\!20$ \\[1pt]

\arrayrulecolor{black}\specialrule{1.6pt}{0pt}{0pt}
\end{tabular}%
}
\end{table}

\paragraph{Fine-tuning.}
During fine-tuning on real data, all methods share the same setup: learning rate, batch size, and training iterations are fixed to ensure fair comparison.

\paragraph{PhysGuard settings.}
The Fisher subspace is estimated using 10\% of simulation samples. We retain the top components that explain 90\% of variance, up to a maximum of 500.

\paragraph{Hardware.}
All experiments are run on a device equipped with:
\begin{itemize}[leftmargin=1.5em,noitemsep,topsep=2pt]
  \item \textbf{CPU}: AMD Ryzen Threadripper PRO 5995WX (64 cores / 128 threads, up to 7.0\,GHz)
  \item \textbf{RAM}: 512\,GB DDR5 system memory
  \item \textbf{GPUs}: 4\,$\times$ NVIDIA GeForce RTX 4090 (24\,GB GDDR6X each, 96\,GB total GPU memory)
  \item \textbf{Software}: Python 3.10, PyTorch 2.10.0, CUDA 12.8, NVIDIA driver 550.144.03
\end{itemize}
Multi-GPU training uses PyTorch \texttt{Distributed Data Parallel} with \texttt{torch.cuda.amp} automatic mixed precision (BF16). All four GPUs are used for pretraining and fine-tuning; Fisher subspace estimation is performed on a single GPU.

\subsection{Baselines}
\label{app:baselines}

\paragraph{Direct Fine-Tuning (DFT).}
All parameters are updated freely using real data. This often improves accuracy but may overwrite simulation knowledge.

\paragraph{ L$_2$-SP.}
Adds a penalty that keeps parameters close to their pretrained values:
\begin{equation}
\mathcal{L} =
\mathcal{L}_{\text{real}} +
\frac{\lambda}{2}
\|\theta - \theta^*\|^2.
\tag{B.7}
\end{equation}

\paragraph{EWC.}
EWC applies different penalties to different parameters based on their importance:
\begin{equation}
\mathcal{L} =
\mathcal{L}_{\text{real}} +
\frac{\lambda}{2}
\sum_j
F_{jj}(\theta_j - \theta^*_j)^2.
\tag{B.8}
\end{equation}

\paragraph{PhysGuard (ours).}
Instead of adding penalties, PhysGuard directly modifies the gradient: updates are restricted so that they do not change the most important directions learned from simulation.

\newpage
\section{Fisher Subspace Estimation}
\label{app:gram}

This appendix walks through the mathematics behind Section~\ref{sec:fisher} step by step, in a self-contained way.
The goal is to answer one practical question: \emph{given a pretrained model, how do we efficiently find the parameter directions that matter most for preserving physical knowledge?}

\subsection{Why the Fisher Information Matrix?}

Intuitively, the Fisher Information Matrix (FIM) tells us how ``sensitive'' the model's predictions are to small changes in each parameter direction.
Formally, for layer $m$ with $d_m$ parameters collected in $\theta^{(m)}$, the empirical FIM evaluated at the pretrained weights $\theta^*$ is:
\begin{equation}
  \mF^{(m)} \;=\; \frac{1}{N}\sum_{i=1}^{N} g_i g_i^{\T}
  \;=\; \frac{1}{N}\,{\mG^{(m)}}^{\T}\mG^{(m)} \;\in\; \mathbb{R}^{d_m \times d_m}
  \label{eq:fim_def}
  \tag{C.1}
\end{equation}
where $g_i = \nabla_{\theta^{(m)}} \ell(\theta^*;\, x_i, y_i) \in \mathbb{R}^{d_m}$ is the per-sample gradient on the $i$-th simulation sample, and $\mG^{(m)} \in \mathbb{R}^{N \times d_m}$ is the matrix stacking all $N$ such gradients row-by-row.

To see why this captures sensitivity, consider a small perturbation $\delta \in \mathbb{R}^{d_m}$ to the weights.
The second-order Taylor expansion of the average loss change in direction $\delta$ is proportional to the quadratic form:
\begin{equation}
  \delta^{\T}\mF^{(m)}\delta
  = \frac{1}{N}\sum_{i=1}^{N} \bigl(\delta^{\T} g_i\bigr)^2
  \tag{C.2}
\end{equation}
which is exactly the average squared change in per-sample loss along $\delta$.
\emph{Large value $\Rightarrow$ moving in direction $\delta$ strongly changes predictions (physics-critical).
Small value $\Rightarrow$ that direction is safe to modify without disrupting the pretrained physics.}

The eigenvectors of $\mF^{(m)}$ with the \emph{largest} eigenvalues therefore identify the most physics-critical parameter directions.

\subsection{The Rank Bottleneck}

Here is a key observation.
We collect $N = 200$ simulation samples, so the gradient matrix $\mG^{(m)}$ has shape $N \times d_m$.
The FIM $\mF^{(m)} = \frac{1}{N}{\mG^{(m)}}^{\T}\mG^{(m)}$ is a sum of $N$ rank-1 matrices, so its rank is at most $N$.

This means: \emph{at most $N$ eigenvalues of $\mF^{(m)}$ are non-zero, regardless of how large $d_m$ is.}
For FNO with ~50M parameters per layer, $N = 200 \ll d_m$, so the physics-critical subspace spans at most 200 directions out of millions.

To see this from the SVD: write $\mG^{(m)} = \boldsymbol{P}\,\boldsymbol{\Sigma}\,\boldsymbol{Q}^{\T}$ where $\boldsymbol{P} \in \mathbb{R}^{N \times N}$, $\boldsymbol{\Sigma} = \mathrm{diag}(\sigma_1,\ldots,\sigma_N)$, $\boldsymbol{Q} \in \mathbb{R}^{d_m \times N}$.
Substituting into Equation~\eqref{eq:fim_def}:
\begin{equation}
  \mF^{(m)}
  = \frac{1}{N}\,\boldsymbol{Q}\,\boldsymbol{\Sigma}^2\,\boldsymbol{Q}^{\T}
  \tag{C.3}
\end{equation}
so the columns of $\boldsymbol{Q}$ are eigenvectors of $\mF^{(m)}$ with eigenvalues $\sigma_j^2/N$.
All other $d_m - N$ eigenvectors have zero eigenvalue and live in the null space of $\mF^{(m)}$.

\subsection{The Computational Problem}

To find the top-$k$ eigenvectors of $\mF^{(m)}$, the naive approach would be:
\begin{enumerate}[leftmargin=1.5em]
  \item Form $\mF^{(m)} = \frac{1}{N}{\mG^{(m)}}^{\T}\mG^{(m)}$---a $d_m \times d_m$ matrix. \emph{For FNO this is $50{,}000{,}000 \times 50{,}000{,}000$.}
  \item Eigendecompose it. Cost: $O(d_m^3)$, which is completely infeasible.
\end{enumerate}
We need a smarter approach.

\subsection{The Gram Martix}

The key insight is that instead of working with the big $d_m \times d_m$ matrix ${\mG}^{\T}\mG$, we can work with the small $N \times N$ matrix $\mK = \mG\mG^{\T}$, called the \emph{Gram matrix}.
These two matrices share all their non-zero eigenvalues.
Here is the proof:

\begin{proposition}[Eigenvalue correspondence]
Let $\mG \in \mathbb{R}^{N \times d_m}$ with $N \ll d_m$.
If $v \in \mathbb{R}^{N}$ is an eigenvector of $\mK = \mG\mG^{\T}$ with eigenvalue $\lambda \neq 0$, then $\tilde{u} = \mG^{\T} v \in \mathbb{R}^{d_m}$ is a non-zero eigenvector of $\mG^{\T}\mG$ (and hence of $\mF^{(m)}$) with the same eigenvalue $\lambda$.
\end{proposition}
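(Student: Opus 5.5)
The plan is to verify two things directly: that $\tilde{u}$ satisfies the eigenvalue equation for $\mG^{\T}\mG$ with eigenvalue $\lambda$, and that $\tilde{u}$ is not the zero vector. The factor $1/N$ in $\mF^{(m)}$ then only rescales the eigenvalue, since $\mF^{(m)} = \frac{1}{N}\mG^{\T}\mG$.

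For the eigen-equation, I will start from $\mK v = \mG\mG^{\T} v = \lambda v$ and multiply on the left by $\mG^{\T}$. By associativity this gives
\begin{equation*}
(\mG^{\T}\mG)(\mG^{\T} v) = \mG^{\T}(\mG\mG^{\T} v) = \lambda\, \mG^{\T} v ,
\end{equation*}
that is, $\mG^{\T}\mG\,\tilde{u} = \lambda \tilde{u}$. Dividing by $N$ yields $\mF^{(m)}\tilde{u} = (\lambda/N)\tilde{u}$. So $\tilde{u}$ is an eigenvector of $\mF^{(m)}$, with eigenvalue $\lambda/N$ under the normalised FIM. This is the sense of ``same eigenvalue'' up to the constant $1/N$, and I will state that explicitly.

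The only real obstacle is non-vanishing, since an eigenvector must be nonzero. Here I will use the hypothesis $\lambda \neq 0$. Compute
\begin{equation*}
\|\tilde{u}\|^2 = v^{\T}\mG\mG^{\T} v = v^{\T}\mK v = \lambda \|v\|^2 .
\end{equation*}
This is strictly positive because $v \neq 0$ as an eigenvector and $\lambda \neq 0$. (Positive semidefiniteness of $\mK$ forces $\lambda > 0$, consistent with the identity.) Hence $\tilde{u} \neq 0$. The same identity gives the normalisation used in Algorithm~\ref{alg:physguard}: if $\|v\| = 1$, then $u = \tilde{u}/\sqrt{\lambda}$ is a unit eigenvector.

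As a remark, I would add that distinct orthonormal eigenvectors $v_i, v_j$ map to orthogonal $\tilde{u}_i, \tilde{u}_j$, since $\tilde{u}_i^{\T}\tilde{u}_j = v_i^{\T}\mK v_j = \lambda_j v_i^{\T}v_j = 0$. So the columns of $\mU$ form an orthogonal basis of the nonzero-eigenvalue eigenspace of $\mF^{(m)}$. This matches the rank bound from the SVD argument in Eq.~(C.3), and it justifies projecting with $\mU\mU^{\T}$ after normalisation.
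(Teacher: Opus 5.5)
Your proof is correct and is essentially the paper's: left-multiply $\mK v=\lambda v$ by $\mG^{\T}$, then use $\lambda\neq 0$ to rule out $\mG^{\T}v=0$. The paper gets non-vanishing by contradiction via $\mK v=\mG(\mG^{\T}v)$, while your identity $\|\tilde u\|^2=\lambda\|v\|^2$ does the same job and also gives the normalisation $\|\tilde u\|=\sqrt{\lambda}$. Your note that the eigenvalue for $\mF^{(m)}$ is $\lambda/N$ rather than $\lambda$ correctly sharpens the statement's wording. Your orthogonality remark shows the normalised columns of $\mU$ are exactly orthonormal, not just approximately. With the top-$k$ truncation, however, they span only the top-$k$ eigenspace, not the full nonzero-eigenvalue eigenspace.
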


\begin{proof}
Start from $\mG\mG^{\T} v = \lambda v$ and left-multiply both sides by $\mG^{\T}$:
\begin{equation}
  \mG^{\T}\bigl(\mG\mG^{\T} v\bigr) = \mG^{\T}(\lambda v)
  \;\;\Longrightarrow\;\;
  \bigl(\mG^{\T}\mG\bigr)\bigl(\mG^{\T}v\bigr) = \lambda\bigl(\mG^{\T}v\bigr).
  \tag{C.4}
\end{equation}
So $\mG^{\T}v$ is indeed an eigenvector of $\mG^{\T}\mG$ with eigenvalue $\lambda$.
It is non-zero because: if $\mG^{\T}v = 0$, then $\mK v = \mG\mG^{\T}v = \mG \cdot 0 = 0$, contradicting $\lambda \neq 0$.
\end{proof}

\noindent The practical payoff: instead of decomposing a $d_m \times d_m$ matrix, we only need to decompose the $N \times N$ Gram matrix $\mK = \mG\mG^{\T}$, which has just $N^2 = 40{,}000$ entries for $N = 200$.
The eigendecomposition costs $O(N^3)$, yielding a $\sim 10^{12}\times$ computation saving over the naive approach.

\subsection{Recovering the Physics-Critical Subspace}

Once we have the eigenvectors $\{v_j\}_{j=1}^{N}$ and eigenvalues $\{\lambda_j\}_{j=1}^{N}$ of $\mK$, sorted in descending order, we:
\begin{enumerate}[leftmargin=1.5em]
  \item \textbf{Select how many directions to protect.}
    We do not protect all $N$ directions---most have negligible eigenvalues.
    Instead, we pick the smallest $k$ such that the captured fraction of total Fisher information exceeds a threshold $\tau$:
    \begin{equation}
      k_m = \min\!\left\{k \;\Big|\; \frac{\sum_{j=1}^{k}\lambda_j}{\sum_{j=1}^{N}\lambda_j} \ge \tau\right\},
      \quad \tau = 0.9.
      \label{eq:kselect}
      \tag{C.5}
    \end{equation}
    This retains eigenvectors that collectively explain 90\% of total Fisher variance.

  \item \textbf{Map back to parameter space.}
    For each selected $j = 1, \ldots, k_m$, compute the corresponding FIM eigenvector in the $d_m$-dimensional parameter space:
    \begin{equation}
      u_j = \frac{{\mG^{(m)}}^{\T} v_j}{\|{\mG^{(m)}}^{\T} v_j\|_2} \;\in\; \mathbb{R}^{d_m}.
    \tag{C.6}
    \end{equation}
    The normalisation ensures $\|u_j\|_2 = 1$ so the projection operator is well-defined.

  \item \textbf{Assemble the physics-critical basis.}
    Stack the $k_m$ directions column-by-column:
    \begin{equation}
      \mU^{(m)} = \bigl[u_1,\, u_2,\, \ldots,\, u_{k_m}\bigr] \;\in\; \mathbb{R}^{d_m \times k_m}.
    \tag{C.7}
    \end{equation}
    The columns of $\mU^{(m)}$ form an approximately orthonormal set and span the \emph{physics-critical subspace} for layer $m$.
    Its orthogonal complement is the subspace in which parameter updates can be made freely without disturbing physical knowledge learnt during pretraining.
\end{enumerate}

\subsection{Computational Cost in Practice}

For a concrete sense of scale, consider the FNO backbone used in our experiments ($d_m \approx 50\text{M}$, $N = 200$):

\begin{itemize}[leftmargin=1.5em,noitemsep]
  \item \textbf{Gradient collection}: $N$ forward--backward passes, each touching $d_m$ parameters. Total: $O(N \cdot d_m)$ operations.
  \item \textbf{Gram matrix construction}: a single batch matrix multiplication $\mG\mG^{\T}$, costing $O(N^2 d_m)$ but implementable in chunks to stay within GPU memory.
  \item \textbf{Eigendecomposition}: only $N \times N = 200 \times 200$, costing $O(N^3) = O(8 \times 10^6)$ operations---negligible.
  \item \textbf{Mapping back}: $N$ matrix-vector products of size $d_m$, i.e.\ $O(N \cdot k_m \cdot d_m)$.
\end{itemize}

The whole procedure is performed \emph{once offline} before fine-tuning begins.
In our experiments it takes between 11 and 47 minutes depending on the architecture, and adds less than 2\,ms overhead per fine-tuning iteration thereafter (only a projection $\mU\mU^{\T}g$ is needed at each step).

\newpage

\section{Complex-Valued Weights in FNO}
\label{app:complex}

Among these architectures we evaluate, FNO is the only one whose learnable parameters are complex numbers rather than real numbers.
This section explains why that is, why it matters for our gradient projection step, and what we do about it.

\subsection*{Background: Complex Weights in FNO}

FNO's core operation is spectral convolution.
At each layer, the input feature map is first transformed into the frequency domain using a Fast Fourier Transform (FFT).
In the frequency domain, every spatial frequency is represented as a complex number, where the real part encodes amplitude and the imaginary part encodes phase.
FNO then directly multiplies these complex Fourier coefficients by a set of learnable weight matrices to mix information across channels.
Because these weight matrices operate on complex numbers, they are themselves stored as complex numbers, specifically in PyTorch's \texttt{torch.complex64} format.

This is fundamentally different from what the other architectures do.
CNO uses FFT internally for anti-aliasing, but its learnable convolution kernels are still real-valued.
So the complex-weight issue is genuinely specific to FNO.

\subsection*{Challenge: Gradients in Real-Valued Operations}

Our entire pipeline, collecting gradients, building the Gram matrix, running the eigendecomposition, and projecting gradients, is built on standard linear algebra over the real numbers.
When a weight $w$ is complex, say $w = a + ib$, its gradient is also complex:
\begin{equation}
  \frac{\partial \mathcal{L}}{\partial w}
  = \frac{\partial \mathcal{L}}{\partial a} + i\,\frac{\partial \mathcal{L}}{\partial b}.
  \tag{D.1}
\end{equation}
We cannot directly stack complex-valued gradients into a real matrix $\mG$, dot-product them, or apply the Fisher projection without some care.
If we simply ignored the imaginary part we would be throwing away half of the gradient information.
If we treated the complex vector as-is without conversion, standard real eigendecomposition routines would fail.

\subsection*{Solution: Real-Imaginary Concatenation}

The fix we use is straightforward.
For a layer with $d$ complex-valued parameters, we treat each complex weight $w_j = a_j + ib_j$ as a pair of real numbers $(a_j, b_j)$.
This turns the layer into an equivalent layer with $2d$ real parameters.
Concretely, whenever we collect a gradient vector for an FNO layer, we immediately convert it by splitting into real and imaginary parts and concatenating them:
\begin{equation}
  g_{\mathrm{real}} = \bigl[\,\mathrm{Re}(g_w)\,,\; \mathrm{Im}(g_w)\,\bigr] \;\in\; \mathbb{R}^{2d}.
  \tag{D.2}
\end{equation}
From this point on, every step of the pipeline sees a perfectly ordinary real vector.
We build the $N \times N$ Gram matrix from these $\mathbb{R}^{2d}$ vectors, eigendecompose it, and obtain the physics-critical subspace as a set of real directions in $\mathbb{R}^{2d}$.
The gradient projection is then applied in the same $\mathbb{R}^{2d}$ space.

After projection, we split the result back down the middle and reassemble the complex gradient:
\begin{equation}
  \hat{\nabla}_W \mathcal{L}
  = \hat{g}_{1:d} + i\,\hat{g}_{d+1:2d}.
  \tag{D.3}
\end{equation}
This is handed back to the optimiser as the (projected) complex gradient, and training continues normally.

\subsection*{Correctness: Wirtinger Calculus and PyTorch Consistency}

Treating a complex weight as two real numbers is not an approximation.
It is exactly how PyTorch itself represents complex tensors in memory, and it is consistent with Wirtinger calculus, the standard framework for differentiating functions of complex variables where both the real and imaginary components are free parameters.
By working in the doubled real space, we guarantee that the Fisher information geometry we compute reflects the true sensitivity of the loss to changes in both the magnitude and phase of each spectral weight, and that the projection step removes exactly the directions most critical to preserving simulation-learned physics.

\newpage
\section{FIM Spectral Probe}
\label{app:probe}

This appendix describes the spectral probe experiment used in Section~\ref{sec:fisher_analysis} to verify that FIM eigenvectors encode low-frequency physics. The experiment is conducted on a pretrained FNO model with the second spectral convolution layer (\texttt{spectral\_convs.1}) as the target.

\subsection{Motivation}
The Fisher Information Matrix ranks parameter directions by how much the pretrained loss changes when the model is perturbed along them. PhysGuard protects the top-$k$ directions. A natural question is: \emph{what kind of output change does each direction produce?} Specifically, we want to confirm that the protected directions predominantly affect large-scale (low-frequency) physical structures, rather than fine-grained noise or high-frequency patterns.

\subsection{Probe Procedure}
Given a pretrained model with parameters $\theta^*$ and a unit-norm parameter direction $v \in \mathbb{R}^d$ (restricted to a single target layer), the probe proceeds as follows:

\begin{enumerate}[leftmargin=*]
  \item \textbf{Perturb.} Compute the perturbed parameters $\theta_\epsilon = \theta^* + \epsilon \, v$, where $\epsilon = 10^{-3}$. The direction $v$ is already unit-normalized, so the perturbation magnitude is fixed across all directions.
  \item \textbf{Forward pass.} Evaluate both the original model $f(\theta^*; x)$ and the perturbed model $f(\theta_\epsilon; x)$ on $N = 40$ held-out real validation samples, processed in mini-batches of 4.
  \item \textbf{Output difference.} Compute the output perturbation field $\Delta y_i = f(\theta_\epsilon; x_i) - f(\theta^*; x_i)$ for each sample. This is a 2D spatial field (e.g., velocity or temperature).
  \item \textbf{2D FFT.} Apply a 2D discrete Fourier transform to each $\Delta y_i$ (averaged over the time dimension), yielding the power spectral density $|\hat{\Delta y}_i(\kappa_x, \kappa_y)|^2$.
  \item \textbf{Low-frequency energy fraction.} Define the low-frequency band as all spatial modes $(\kappa_x, \kappa_y)$ satisfying $\sqrt{\kappa_x^2 + \kappa_y^2} \leq \kappa_{\mathrm{cut}}$, where $\kappa_{\mathrm{cut}} = \mathrm{round}(\kappa_{\max} / 3)$ with $\kappa_{\max} = \min(H/2, W/2)$ being the maximum radial wavenumber. This corresponds to the lowest third of the radial frequency range. Compute:
  \begin{equation}\tag{E.1}
    f_{\mathrm{low}} = \frac{\sum_{(\kappa_x,\kappa_y) \in \text{low-}f} |\hat{\Delta y}(\kappa_x, \kappa_y)|^2}{\sum_{(\kappa_x,\kappa_y)} |\hat{\Delta y}(\kappa_x, \kappa_y)|^2}
  \end{equation}
  and average over the batch.
\end{enumerate}

\subsection{Directions Compared}
We apply the probe to two categories of parameter directions:
\begin{itemize}[leftmargin=*]
  \item \textbf{FIM eigenvectors.} The top-$k = 50$ eigenvectors of the empirical FIM, computed from $N_{\mathrm{FIM}} = 80$ per-sample gradients on the simulation training set via the Gram-matrix method described in Section~\ref{sec:fisher} and Appendix~\ref{app:gram}.
  \item \textbf{Random directions.} We independently sample 10 random unit vectors from a standard Gaussian and normalize them. These serve as a control baseline and are not matched to specific eigenvectors.
\end{itemize}

\subsection{Layer Selection}
The probe targets a \emph{single} representative layer: the second spectral convolution layer (\texttt{spectral\_convs.1.weights1}) of the FNO. This layer has complex-valued parameters; we concatenate real and imaginary parts into a single real vector before computing FIM eigenvectors and applying perturbations. Only the target layer's parameters are perturbed while all other layers remain fixed. We chose this layer because spectral convolution layers directly modulate frequency-domain representations and are thus most relevant to the low-frequency alignment hypothesis.

\newpage
\section{Qualitative Visualisations}
\label{app:visualization}

In this section, we provide the full visualization of results for FNO, CNO, DeepONet, and Transolver across all three datasets, complementing the quantitative results reported in Table~\ref{tab:main}.

\subsection{Cylinder Flow}

\begin{figure}[H]
  \centering
  \includegraphics[width=\linewidth]{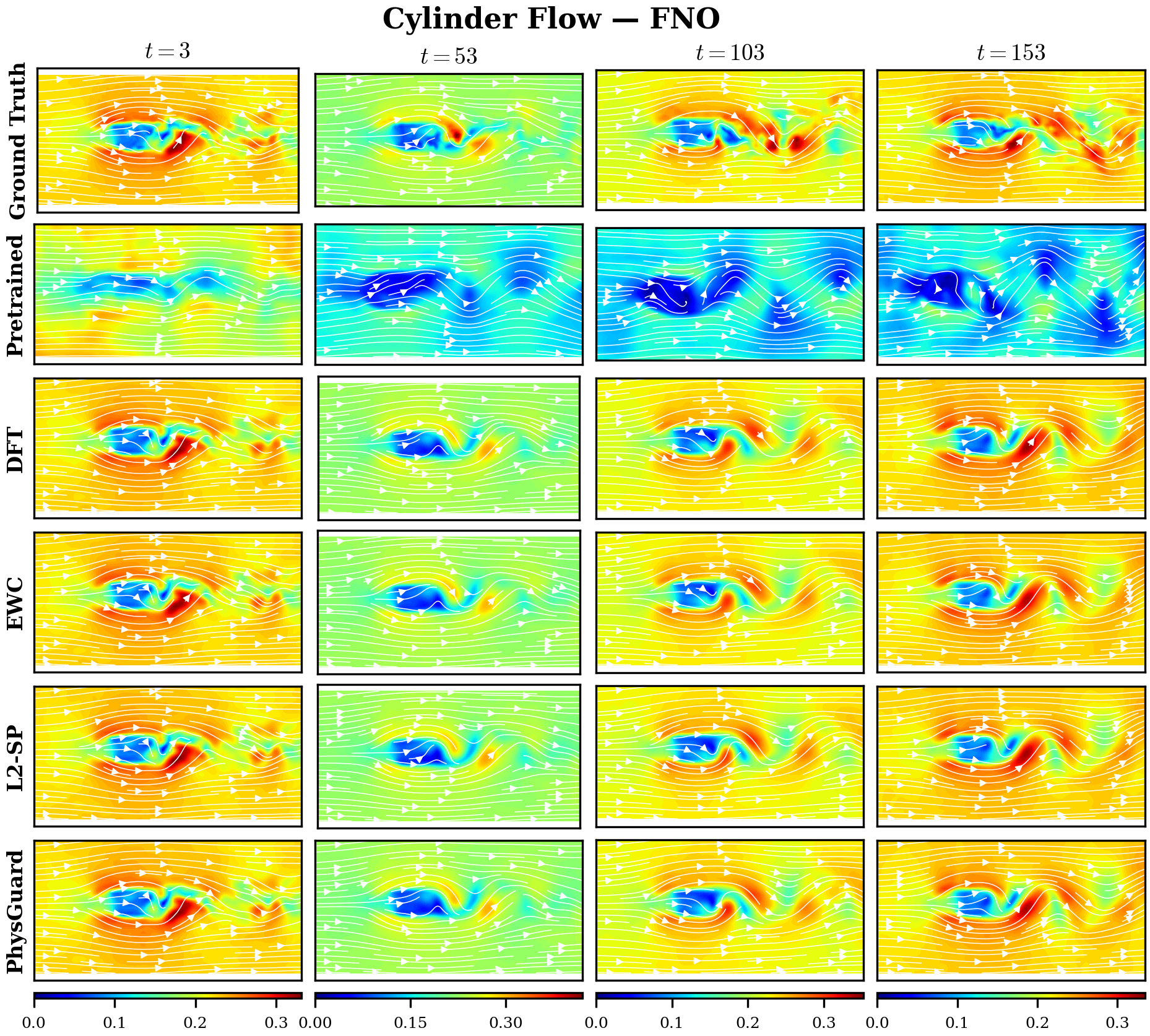}
  \caption{Cylinder Flow -- FNO predictions.}
  \label{fig:app_cylinder_fno}
\end{figure}

\newpage
\begin{figure}[H]
  \centering
  \includegraphics[width=\linewidth]{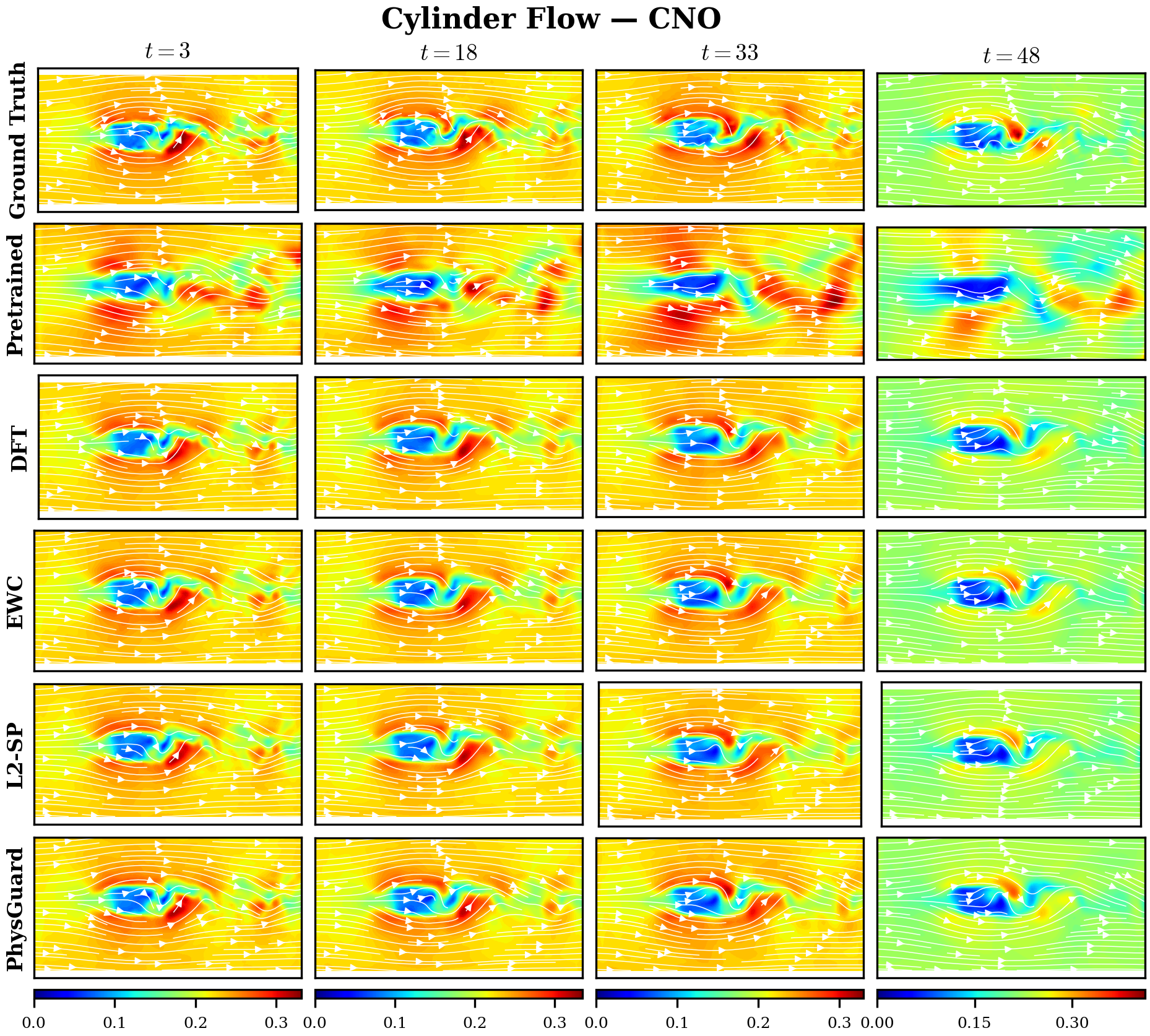}
  \caption{Cylinder Flow -- CNO predictions.}
  \label{fig:app_cylinder_cno}
\end{figure}

\newpage
\begin{figure}[H]
  \centering
  \includegraphics[width=\linewidth]{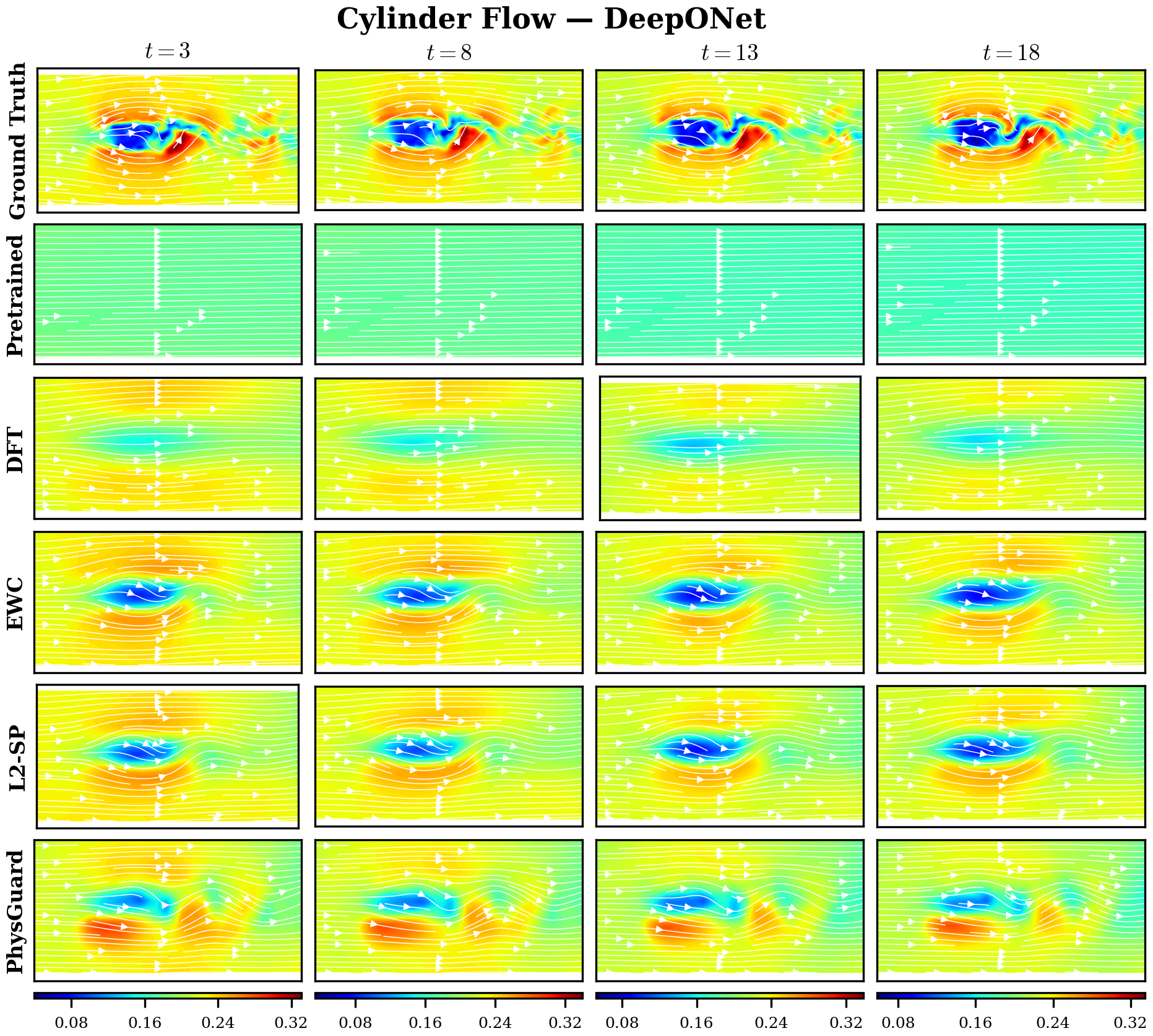}
  \caption{Cylinder Flow -- DeepONet predictions.}
  \label{fig:app_cylinder_deeponet}
\end{figure}

\newpage
\begin{figure}[H]
  \centering
  \includegraphics[width=\linewidth]{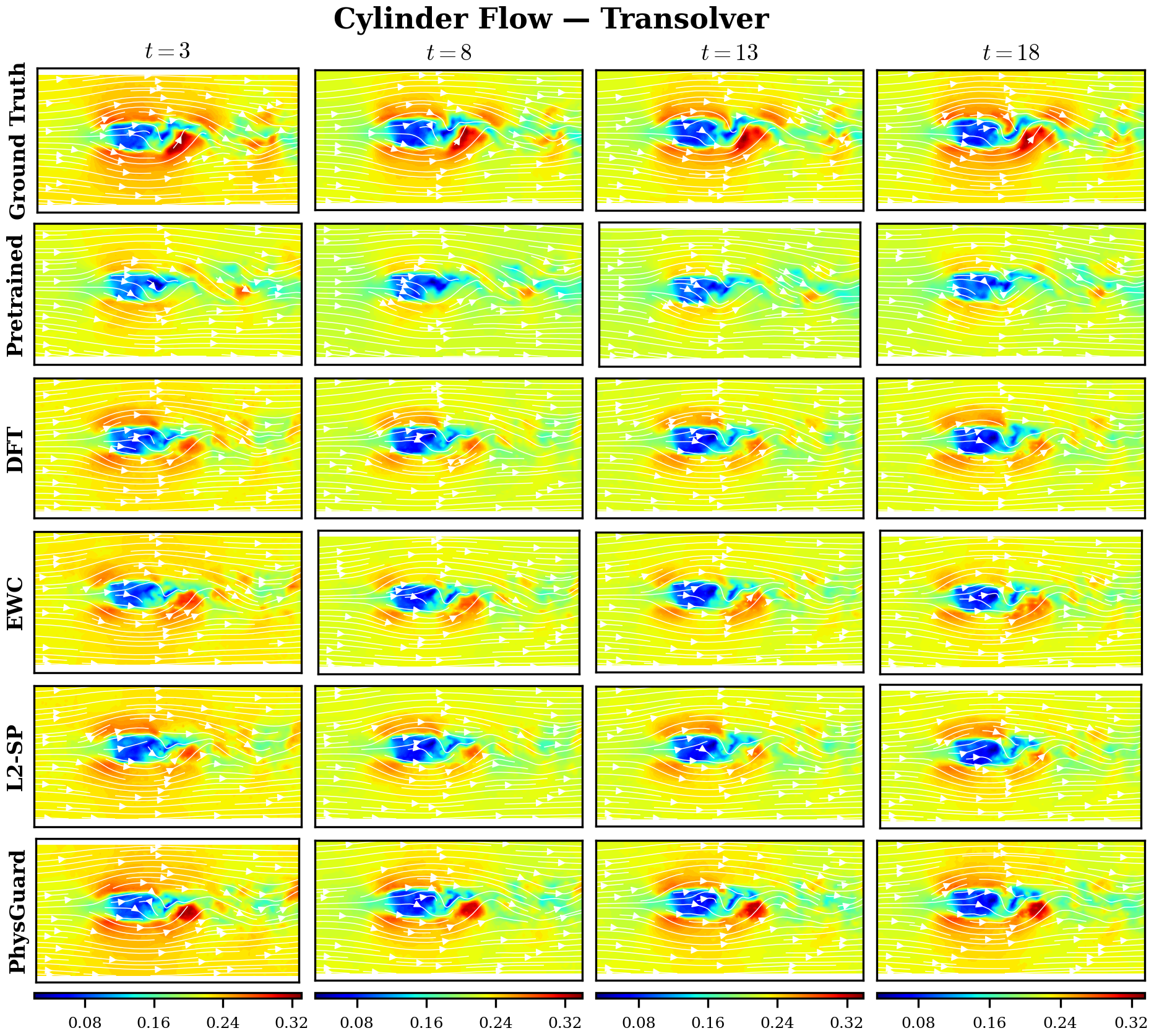}
  \caption{Cylinder Flow -- Transolver predictions.}
  \label{fig:app_cylinder_transolver}
\end{figure}

\newpage
\subsection{Controlled Cylinder Flow}

\begin{figure}[H]
  \centering
  \includegraphics[width=\linewidth]{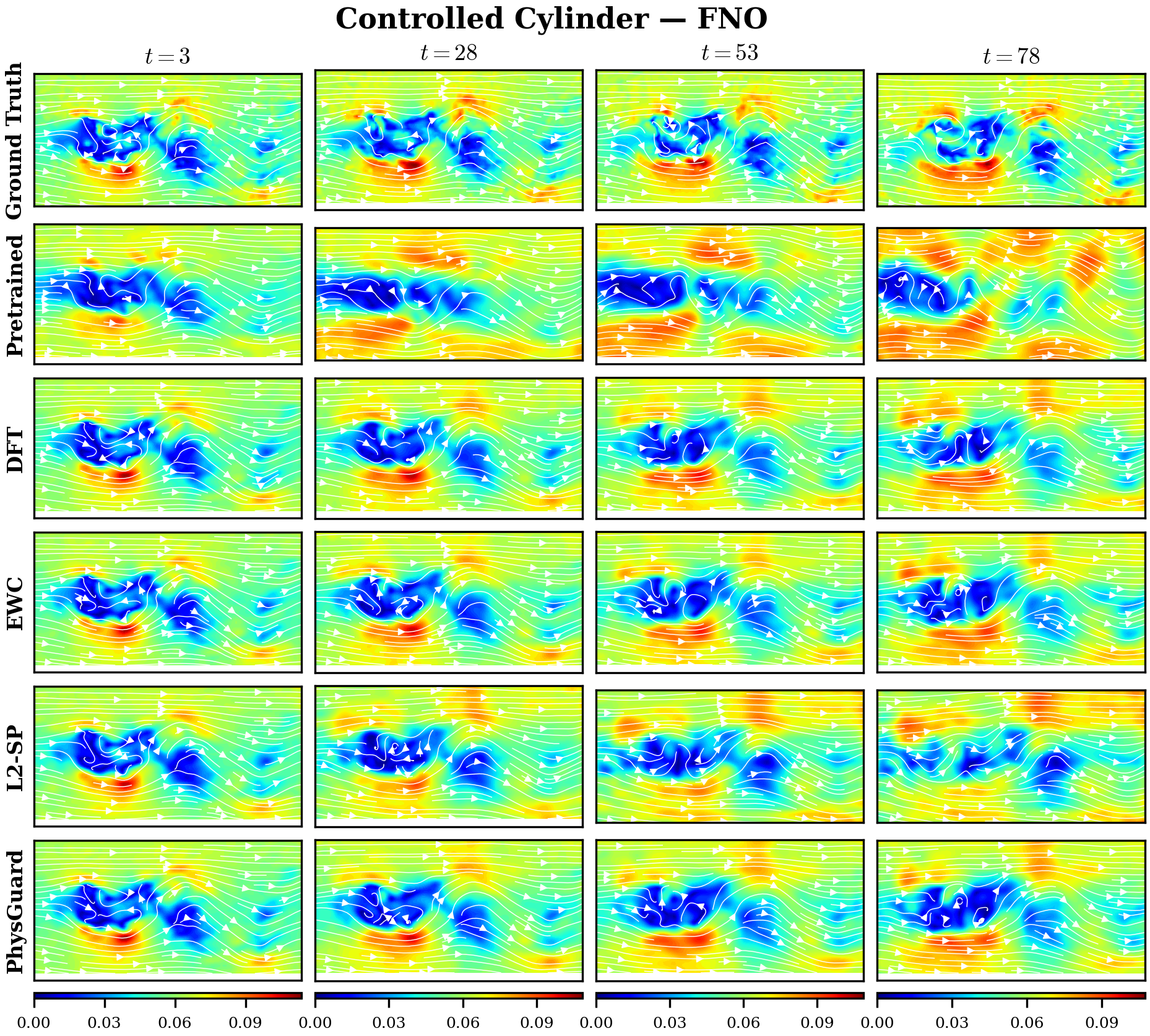}
  \caption{Controlled Cylinder Flow -- FNO predictions.}
  \label{fig:app_controlled_cylinder_fno}
\end{figure}

\newpage
\begin{figure}[H]
  \centering
  \includegraphics[width=\linewidth]{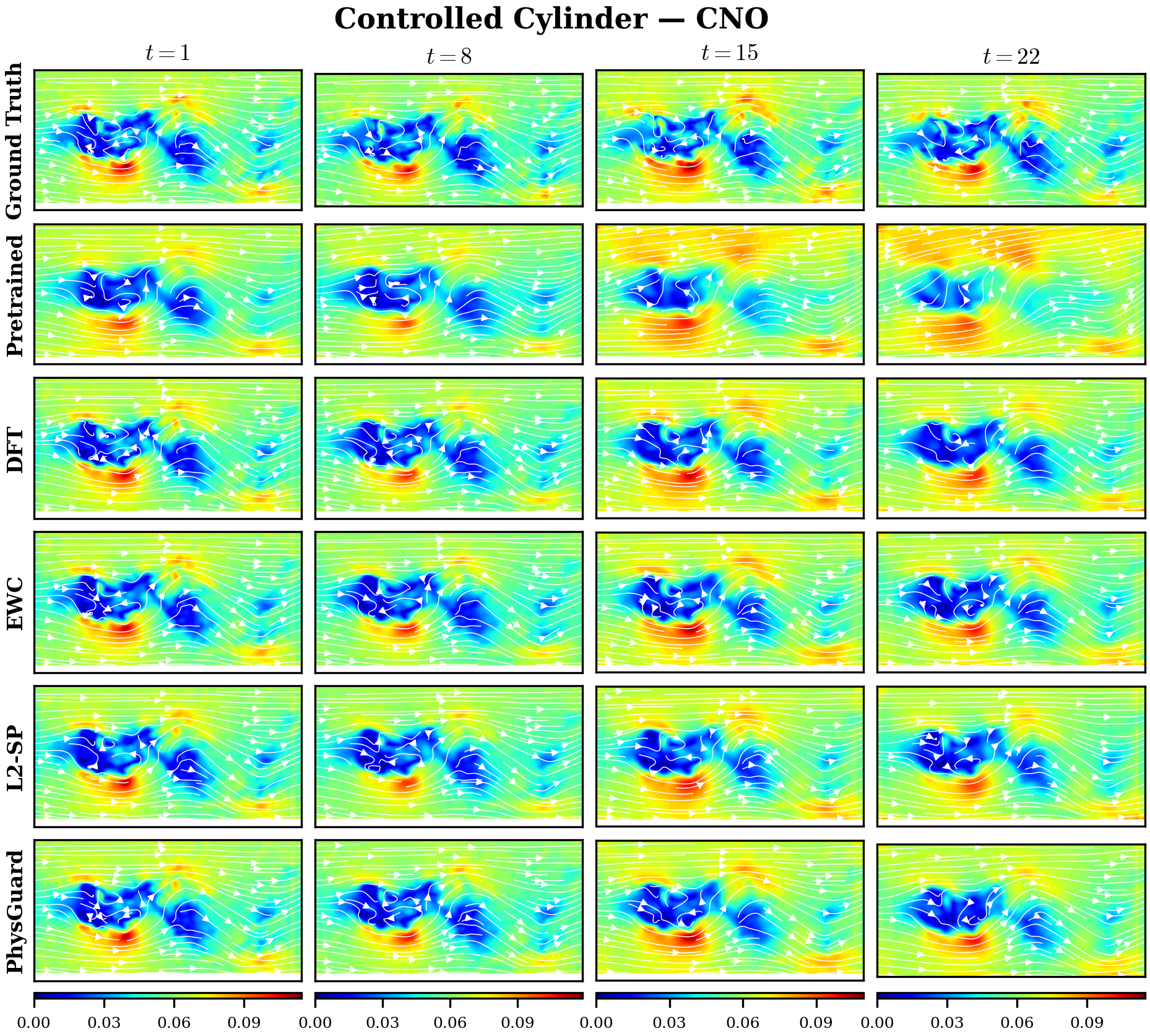}
  \caption{Controlled Cylinder Flow -- CNO predictions.}
  \label{fig:app_controlled_cylinder_cno}
\end{figure}

\newpage
\begin{figure}[H]
  \centering
  \includegraphics[width=\linewidth]{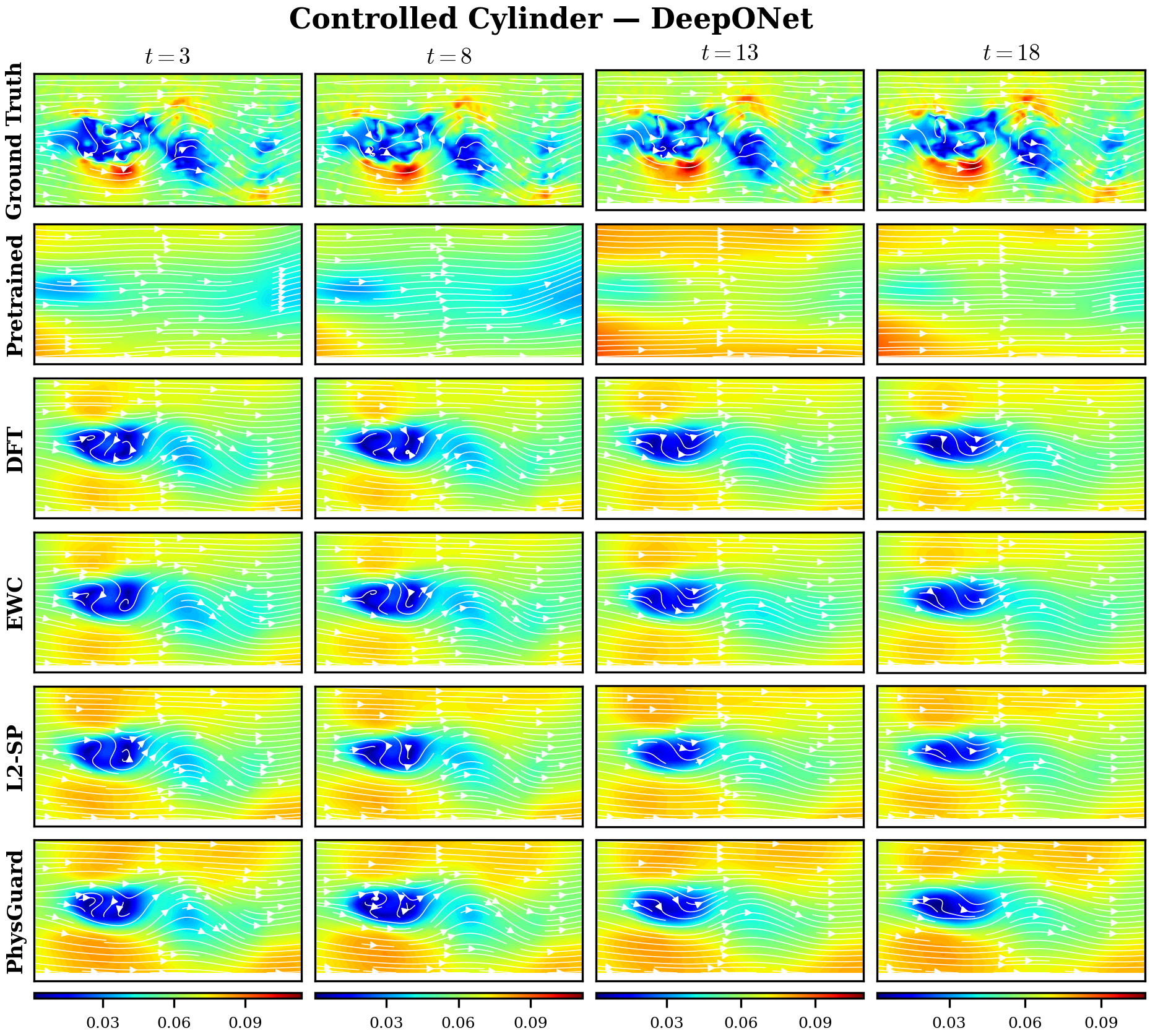}
  \caption{Controlled Cylinder Flow -- DeepONet predictions.}
  \label{fig:app_controlled_cylinder_deeponet}
\end{figure}

\newpage
\begin{figure}[H]
  \centering
  \includegraphics[width=\linewidth]{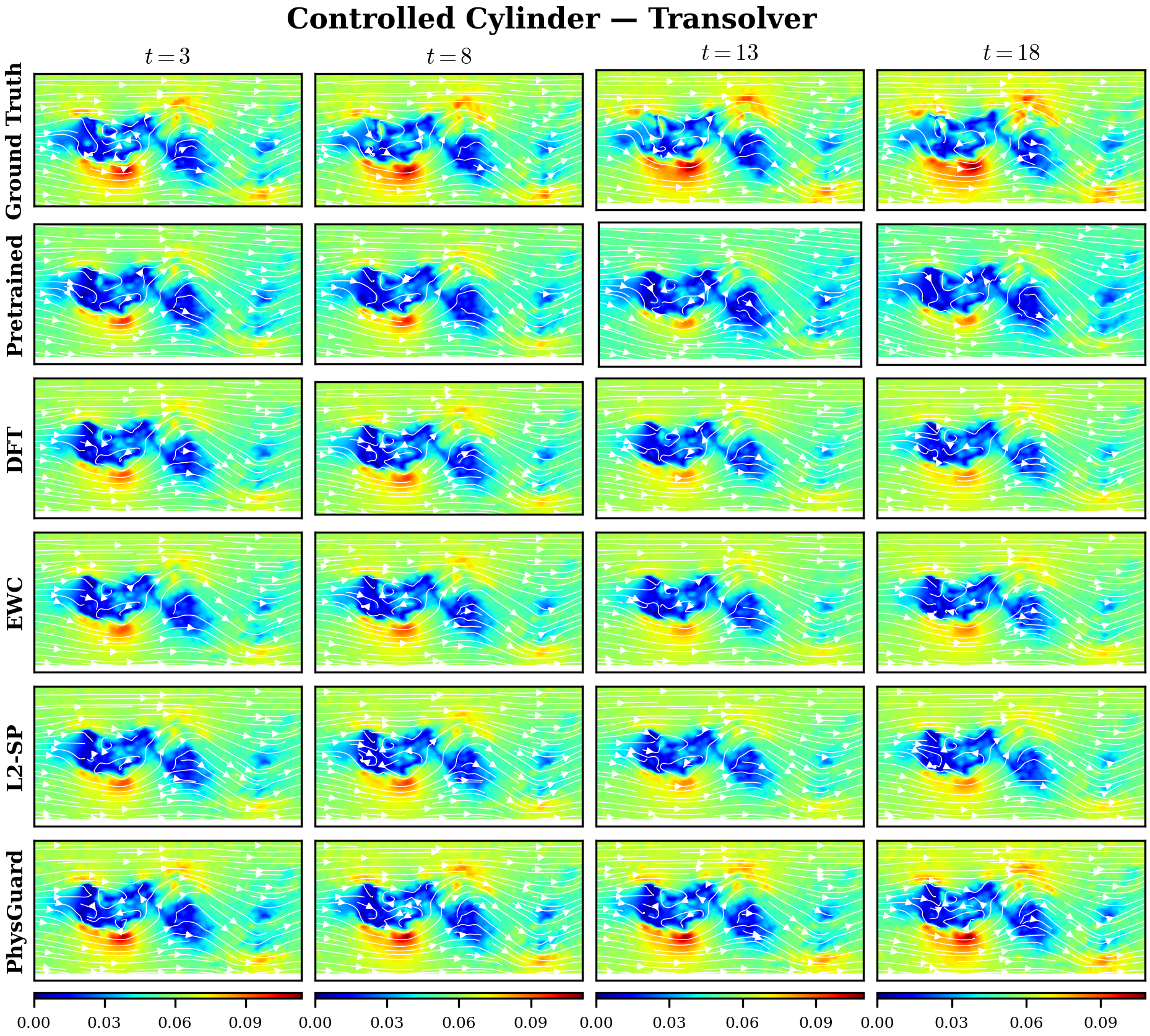}
  \caption{Controlled Cylinder Flow -- Transolver predictions.}
  \label{fig:app_controlled_cylinder_transolver}
\end{figure}

\clearpage
\subsection{Turbulent Combustion}
\begin{center}
  \includegraphics[width=0.95\linewidth,keepaspectratio]{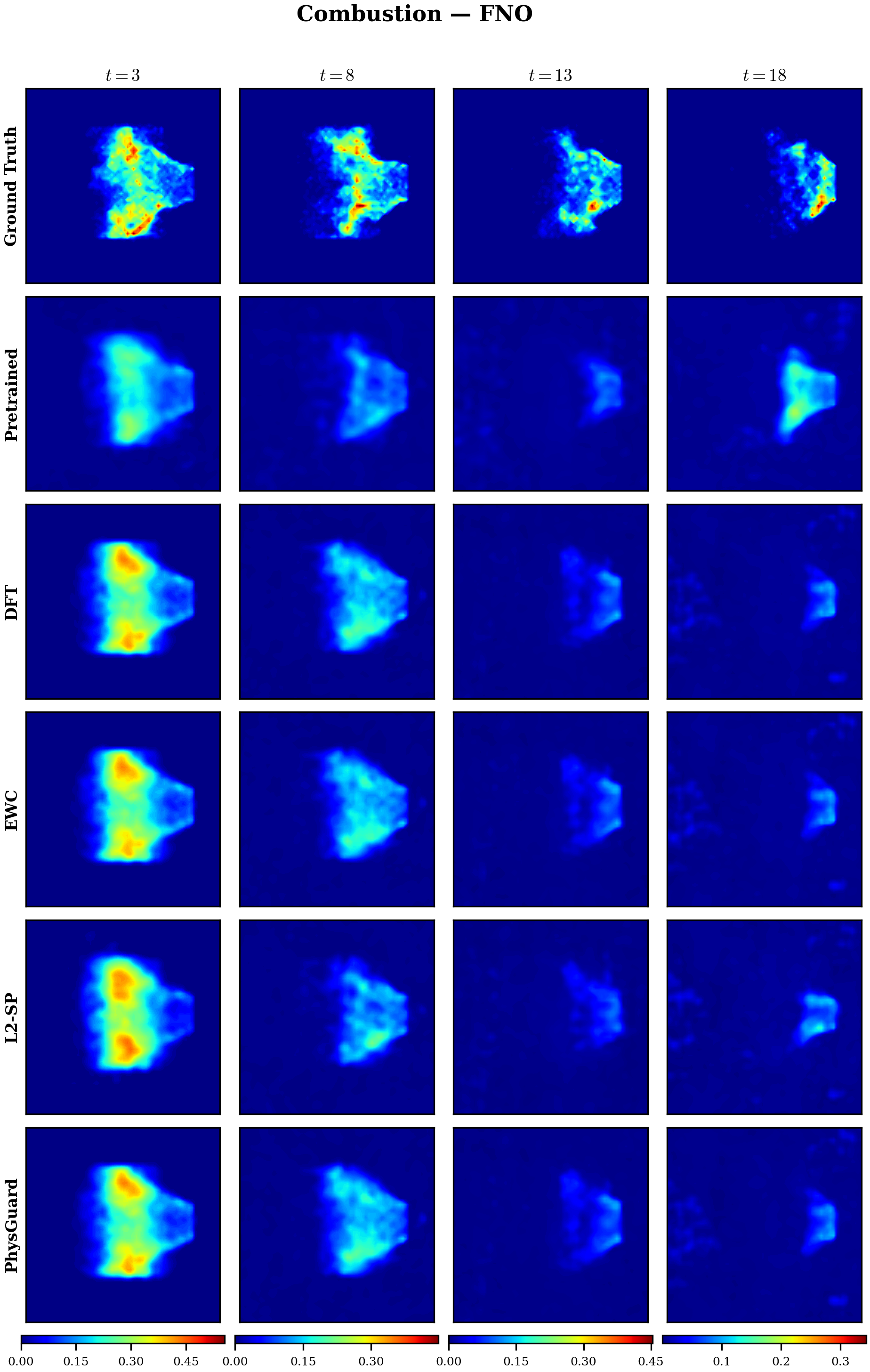}
  \captionof{figure}{Turbulent Combustion -- FNO predictions.}
  \label{fig:app_combustion_fno}
\end{center}

\newpage
\begin{figure}[H]
  \centering
  \includegraphics[width=\linewidth]{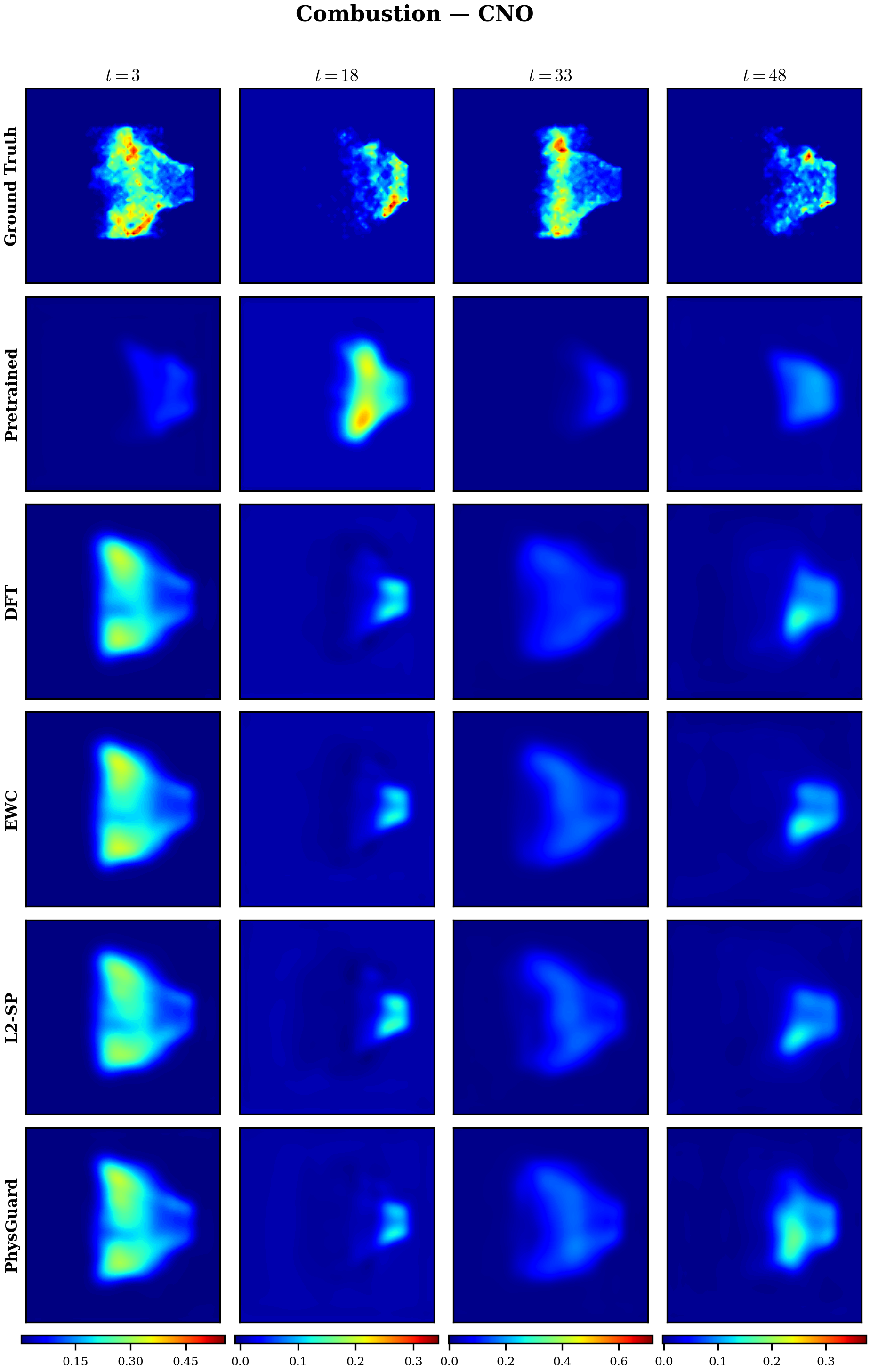}
  \caption{Turbulent Combustion -- CNO predictions.}
  \label{fig:app_combustion_cno}
\end{figure}

\newpage
\begin{figure}[H]
  \centering
  \includegraphics[width=\linewidth]{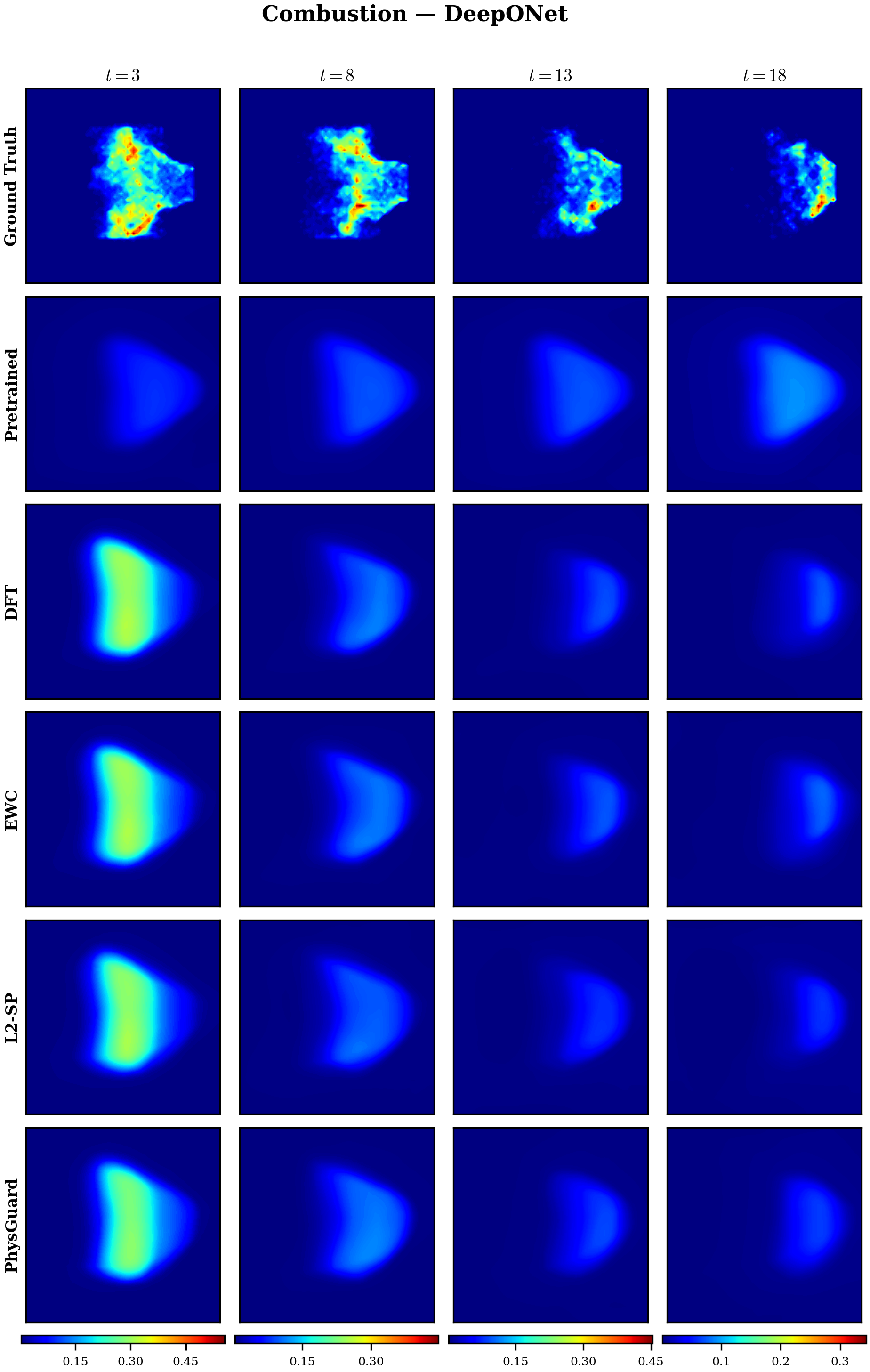}
  \caption{Turbulent Combustion -- DeepONet predictions.}
  \label{fig:app_combustion_deeponet}
\end{figure}

\newpage
\begin{figure}[H]
  \centering
  \includegraphics[width=\linewidth]{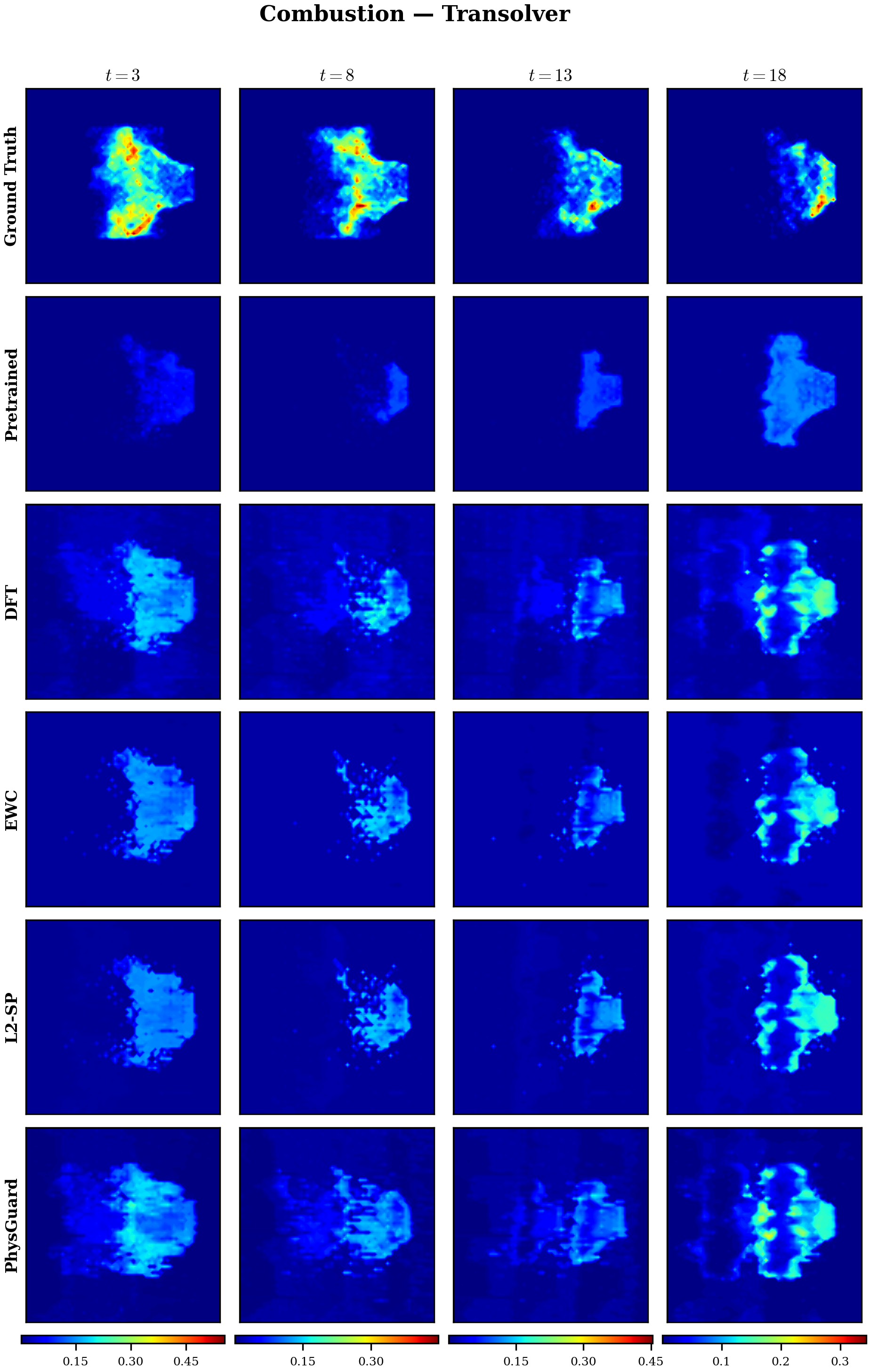}
  \caption{Turbulent Combustion -- Transolver predictions.}
  \label{fig:app_combustion_transolver}
\end{figure}

\newpage
\section{DPOT: A Negative Case}
\label{app:dpot}

Section~\ref{sec:limitations} briefly notes that PhysGuard does not deliver consistent gains on
the foundation-scale operator DPOT-S~\citep{hao2024dpot}. The purpose of this appendix is to make
the scope of PhysGuard's core assumption---a \emph{low-rank} Fisher spectrum---explicit, and to
report the negative case transparently.

\subsection{Architecture}
\label{app:dpot_arch}

Data-efficient Pretraining Operator Transformer (DPOT) combines two ideas: FNO-style spectral feature extraction and a scalable Transformer
backbone. Input fields are first projected onto a spectral basis---mixing spatial information in
the frequency domain, similar to FNO---and the resulting tokens are processed by a stack of
Diffusion-Transformer (DiT) style attention blocks~\citep{peebles2023scalable}. The architecture
is designed for \emph{multi-task pretraining}: a single backbone is trained jointly across many
different PDE datasets, then fine-tuned per task, enabling strong generalisation even from limited
real data.

We use the DPOT-S (small) variant: 6 Transformer blocks, hidden dimension 1024, 8 attention
heads, totalling 41.3M parameters. Unlike the four architectures in Table~\ref{tab:arch_comparison},
whose Fisher information is either sharply low-rank (FNO), moderate (CNO), broad (DeepONet), or
dense but shallow (Transolver), DPOT-S spreads Fisher information \emph{evenly} across its
attention heads and blocks.

\subsection{Pretraining Configuration}
\label{app:dpot_pretrain}

Table~\ref{tab:dpot_pretrain} reports the pretraining hyperparameters used for DPOT-S on each of
the three RealPDEBench scenarios. The overall protocol (AdamW with cosine annealing, BF16 AMP,
DDP across 4 RTX~4090 GPUs) matches the one described in Appendix~\ref{app:exp_details}; only the
architecture-specific and scenario-specific values differ.

\begin{table}[h]
\centering
\definecolor{PhHdr}{HTML}{1B3A6B}
\definecolor{PhAlt}{HTML}{EBF2FA}
\definecolor{PhRule}{HTML}{8BAED4}
\caption{Pretraining configuration of DPOT-S across the three RealPDEBench scenarios.}
\label{tab:dpot_pretrain}
\renewcommand{\arraystretch}{1.35}
\resizebox{\textwidth}{!}{%
\setlength{\tabcolsep}{7pt}
\begin{tabular}{l r l r c c r >{\footnotesize}p{7.4cm}}
\specialrule{1.6pt}{0pt}{0pt}
\rowcolor{PhHdr}
\textbf{\color{white}Scenario} &
\textbf{\color{white}Params} &
\textbf{\color{white}LR} &
\textbf{\color{white}Iters} &
\textbf{\color{white}Batch/GPU} &
\textbf{\color{white}GPUs} &
\textbf{\color{white}Eff.\,batch} &
\textbf{\color{white}Architecture hyperparameters} \\[-1pt]
\arrayrulecolor{PhRule}\specialrule{0.45pt}{0pt}{0pt}
Cylinder flow
  & 41.3\,M & $1\!\times\!10^{-3}$ &  4{,}000 &  4 & 4 & 16
  & \texttt{embed\_dim}=1024, \texttt{depth}=6, \texttt{n\_blocks}=8, \texttt{modes}=32, \texttt{patch\_size}=8;\enspace warm-start \texttt{model\_S.pth} \\
\rowcolor{PhAlt}
Controlled cylinder
  & 41.3\,M & $5\!\times\!10^{-4}$ &  4{,}000 &  4 & 4 & 16
  & \textit{Same architecture as Cylinder flow} \\
Turbulent combustion
  & 41.3\,M & $1\!\times\!10^{-3}$ & 10{,}000 &  4 & 4 & 16
  & \textit{Same architecture as Cylinder flow} \\[1pt]
\arrayrulecolor{black}\specialrule{1.6pt}{0pt}{0pt}
\end{tabular}%
}
\end{table}

\subsection{Why the Fisher Spectrum is Nearly Uniform}
\label{app:dpot_fisher}

The four architectures evaluated in the main text all share one property: their empirical Fisher
Information Matrix concentrates its energy in a small number of dominant directions (see the
``Fisher'' column of Table~\ref{tab:arch_comparison}: Low-rank, Moderate, Broad, Dense). This is
precisely the regime in which PhysGuard's top-$k$ eigenvector projection is meaningful---the
protected subspace is compact, and its orthogonal complement is large enough to leave substantial
room for adaptation.

DPOT-S breaks this assumption. Pretrained jointly on many PDE datasets with a Diffusion-Transformer
backbone, its FIM eigenvalues decay extremely slowly and are nearly uniform across attention heads
and blocks (labelled ``Even''). Under the $\tau = 0.9$ criterion of Equation~\eqref{eq:kselect}, the
number of directions required to capture 90\% of total Fisher variance grows so large that the
``free'' subspace is effectively squeezed out: projecting the gradient onto the complement removes
most of its informative component, leaving too little signal for real-data adaptation.

\subsection{Empirical Results}
\label{app:dpot_results}

Table~\ref{tab:dpot} reports DPOT-S performance across all three RealPDEBench scenarios using the
same baselines and metrics as Table~\ref{tab:main}. 

\begin{table}[h]
\centering
\caption{DPOT-S results on RealPDEBench. In contrast to Table~\ref{tab:main}, PhysGuard offers
limited benefit because DPOT's Fisher spectrum is nearly uniform and the protected subspace
effectively swallows most update directions. Best and second-best results are highlighted in
\colorbox[rgb]{.741,.843,.933}{\textbf{Blue}} and \colorbox[rgb]{.886,.937,.855}{Green}.}
\label{tab:dpot}
\scriptsize
\setlength{\tabcolsep}{1.8pt}
\begin{tabular}{ll cccc cccc cccc}
\toprule
\multirow{2}{*}{Architecture} & \multirow{2}{*}{Method} & \multicolumn{4}{c}{Cylinder Flow} & \multicolumn{4}{c}{Controlled Cylinder} & \multicolumn{4}{c}{Turbulent Combustion} \\
\cmidrule(lr){3-6} \cmidrule(lr){7-10} \cmidrule(lr){11-14}
& & RMSE\,$\downarrow$ & $R^2$\,$\uparrow$ & fRMSE\,$\downarrow$ & Low-$f$\,$\downarrow$ & RMSE\,$\downarrow$ & $R^2$\,$\uparrow$ & fRMSE\,$\downarrow$ & Low-$f$\,$\downarrow$ & RMSE\,$\downarrow$ & $R^2$\,$\uparrow$ & fRMSE\,$\downarrow$ & Low-$f$\,$\downarrow$ \\
\midrule
\multirow{5}{*}{\shortstack{\textbf{DPOT-S}\\ (41.3\,M)}} & Pretrained & 0.05663 & 0.8287 & 0.00838 & 0.01202 & 0.02187 & 0.9049 & 0.00333 & 0.00278 & 0.03957 & 0.1351 & 0.00549 & 0.00935 \\
  & DFT & \cellcolor[rgb]{ .886,  .937,  .855}0.04544 & \cellcolor[rgb]{ .886,  .937,  .855}0.8897 & \cellcolor[rgb]{ .886,  .937,  .855}0.00667 & 0.01026 & \cellcolor[rgb]{ .741,  .843,  .933}\textbf{0.00996} & \cellcolor[rgb]{ .886,  .937,  .855}0.9803 & \cellcolor[rgb]{ .886,  .937,  .855}0.00106 & \cellcolor[rgb]{ .886,  .937,  .855}0.00089 & \cellcolor[rgb]{ .741,  .843,  .933}\textbf{0.02668} & \cellcolor[rgb]{ .741,  .843,  .933}\textbf{0.6066} & \cellcolor[rgb]{ .741,  .843,  .933}\textbf{0.00350} & \cellcolor[rgb]{ .741,  .843,  .933}\textbf{0.00388} \\
  & L$_2$-SP & 0.04687 & 0.8827 & 0.00698 & 0.01075 & 0.01126 & 0.9748 & 0.00119 & 0.00098 & \cellcolor[rgb]{ .886,  .937,  .855}0.02866 & 0.5463 & 0.00388 & 0.00484 \\
  & EWC & \cellcolor[rgb]{ .741,  .843,  .933}\textbf{0.04509} & \cellcolor[rgb]{ .741,  .843,  .933}\textbf{0.8914} & \cellcolor[rgb]{ .741,  .843,  .933}\textbf{0.00660} & \cellcolor[rgb]{ .886,  .937,  .855}0.00989 & \cellcolor[rgb]{ .741,  .843,  .933}\textbf{0.00996} & \cellcolor[rgb]{ .741,  .843,  .933}\textbf{0.9803} & \cellcolor[rgb]{ .741,  .843,  .933}\textbf{0.00105} & \cellcolor[rgb]{ .741,  .843,  .933}\textbf{0.00088} & 0.02723 & \cellcolor[rgb]{ .886,  .937,  .855}0.5904 & \cellcolor[rgb]{ .886,  .937,  .855}0.00361 & \cellcolor[rgb]{ .886,  .937,  .855}0.00415 \\
  & PhysGuard & 0.04893 & 0.8721 & 0.00690 & \cellcolor[rgb]{ .741,  .843,  .933}\textbf{0.00935} & \cellcolor[rgb]{ .886,  .937,  .855}0.01350 & 0.9638 & 0.00138 & 0.00109 & 0.03712 & 0.2388 & 0.00493 & 0.00767 \\
\bottomrule
\end{tabular}
\end{table}

\subsection{Qualitative Visualisation}
\label{app:dpot_viz}

Figures~\ref{fig:app_cylinder_dpot}--\ref{fig:app_combustion_dpot} show DPOT-S predictions on the
three scenarios under each baseline. They use the same layout as the per-architecture panels in
Appendix~\ref{app:visualization}. Consistent with Table~\ref{tab:dpot}, PhysGuard traces follow
the ground truth closely on Cylinder Flow but underperform DFT/EWC on Controlled Cylinder and
Turbulent Combustion, visible as slightly blurrier vortex boundaries and weaker local intensity.

\begin{figure}[H]
  \centering
  \includegraphics[width=\linewidth]{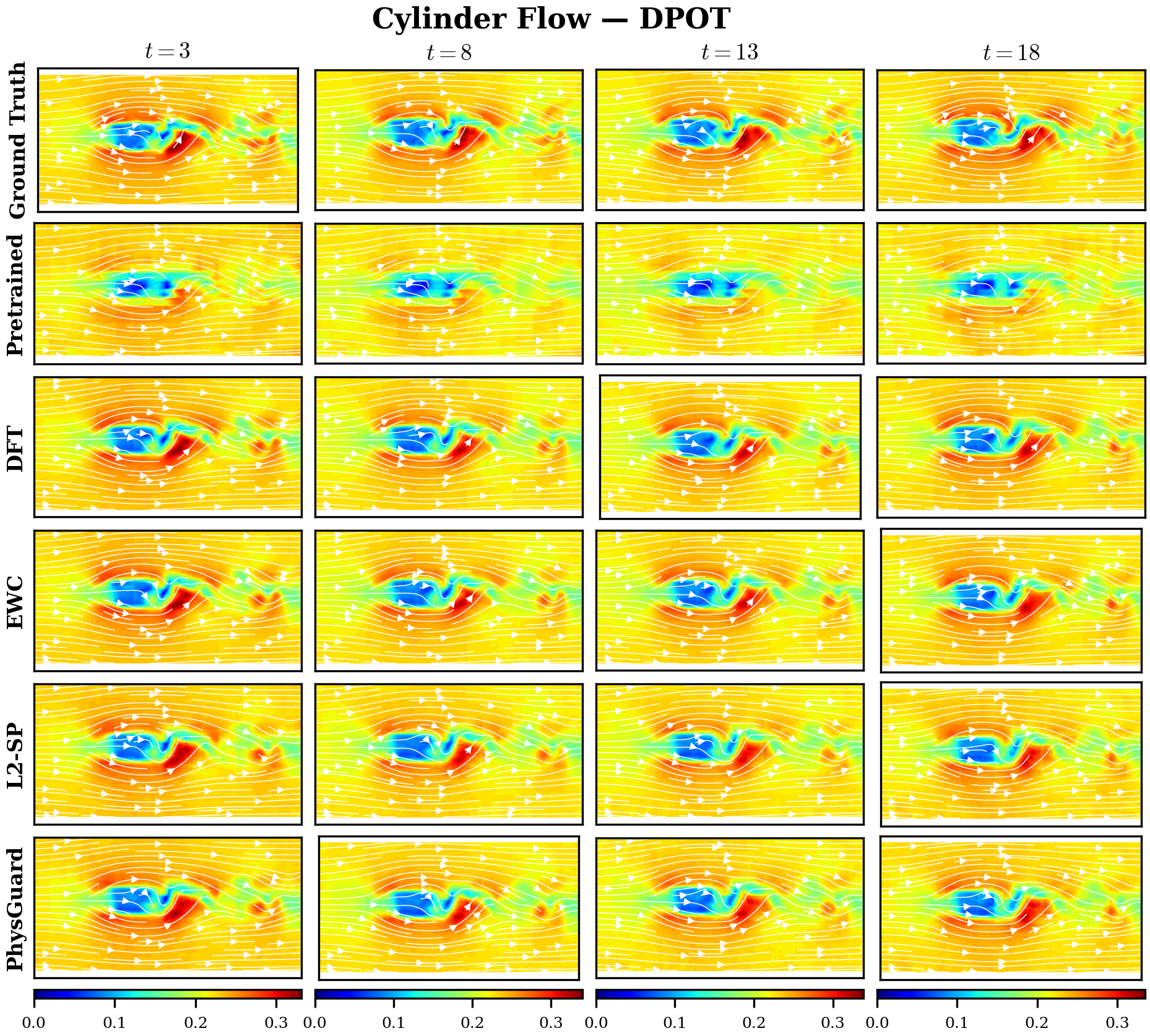}
  \caption{Cylinder Flow -- DPOT-S predictions.}
  \label{fig:app_cylinder_dpot}
\end{figure}

\begin{figure}[H]
  \centering
  \includegraphics[width=\linewidth]{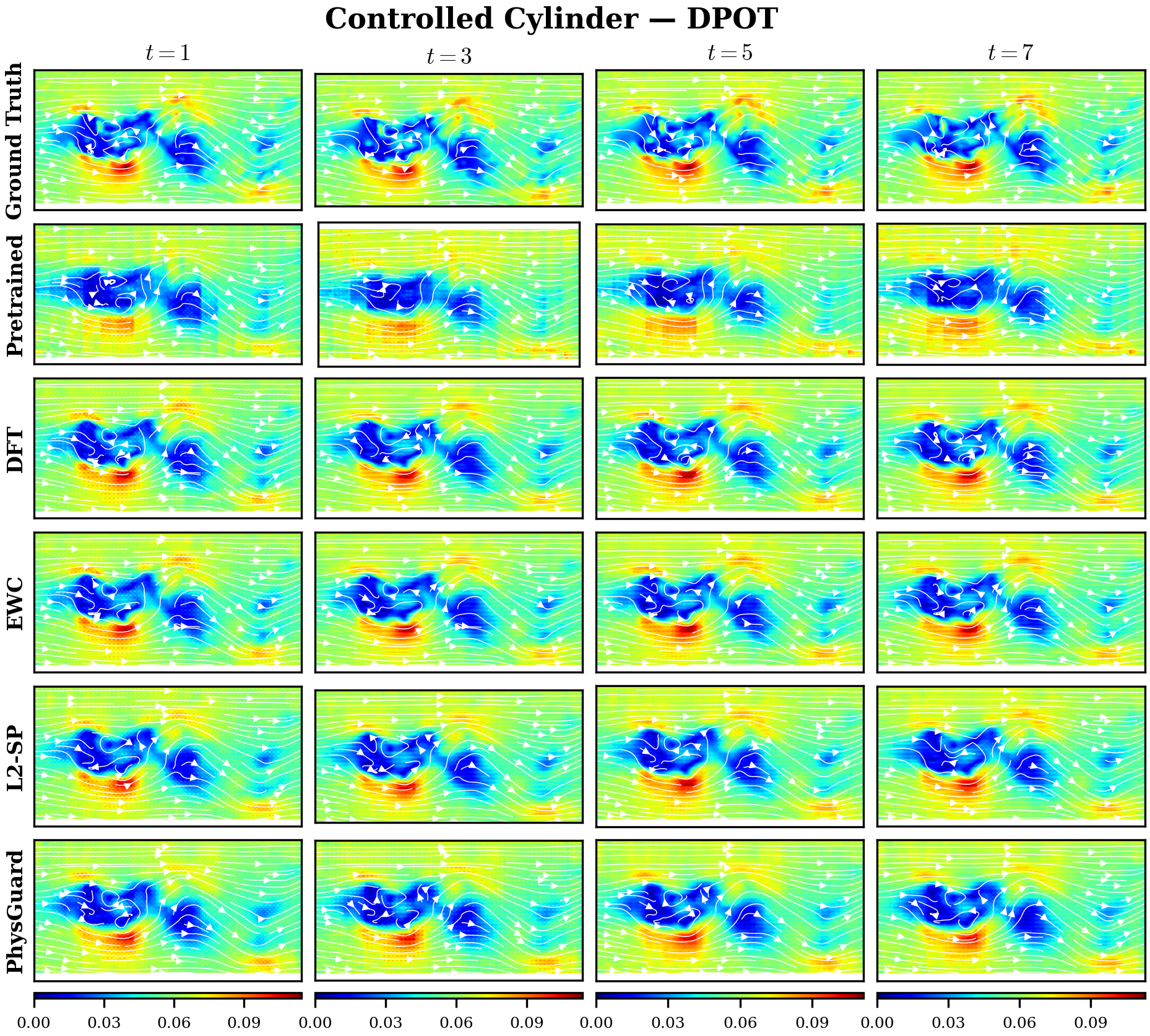}
  \caption{Controlled Cylinder Flow -- DPOT-S predictions.}
  \label{fig:app_controlled_cylinder_dpot}
\end{figure}

\begin{figure}[H]
  \centering
  \includegraphics[width=\linewidth]{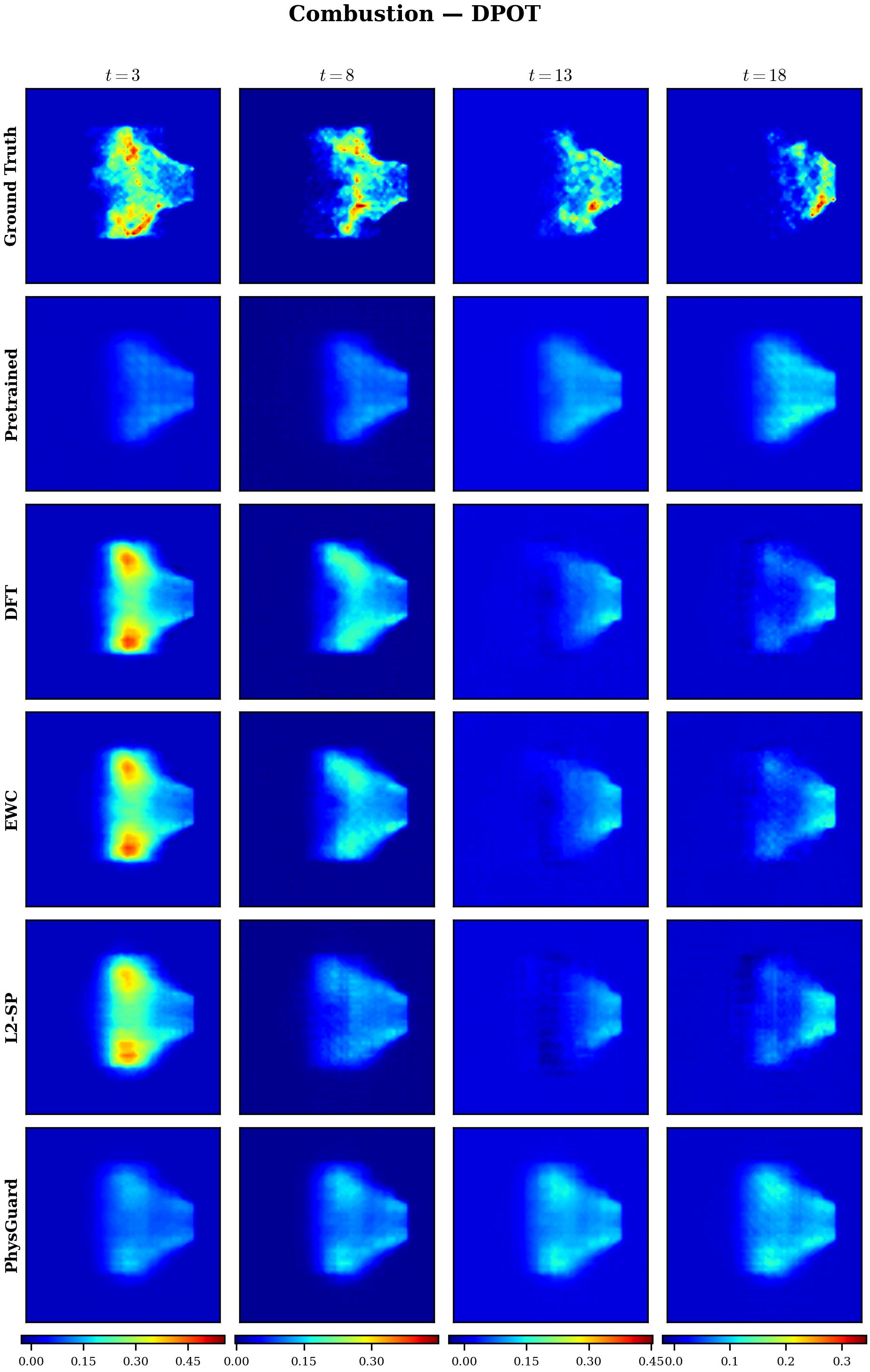}
  \caption{Turbulent Combustion -- DPOT-S predictions.}
  \label{fig:app_combustion_dpot}
\end{figure}

\end{document}